\def\eg{\textit{e.g.}}
\def\ie{\textit{i.e.}}
\newcommand{\PreserveBackslash}[1]{\let\temp=\\#1\let\\=\temp}
\newcolumntype{C}[1]{>{\PreserveBackslash\centering}p{#1}}
\newcolumntype{R}[1]{>{\PreserveBackslash\raggedleft}p{#1}}
\newcolumntype{L}[1]{>{\PreserveBackslash\raggedright}p{#1}}
\newcommand{\extension}[1]{\textcolor{black}{#1}}
\newcommand{\revision}[1]{\textcolor{black}{#1}}
\newcommand{\rerevision}[1]{\textcolor{black}{#1}}
\newcommand{\major}[1]{\textcolor{black}{#1}}
\definecolor{battleshipgrey}{rgb}{0.52, 0.52, 0.51}
\definecolor{capri}{rgb}{0.0, 0.75, 1.0}
\definecolor{mediumspringgreen}{rgb}{0.0, 0.98, 0.6}
\journalname{International Journal of Computer Vision}
\begin{document}

\title{Softmax-free Linear Transformers
}


\author{
	Jiachen Lu$^1$, \and
	Junge Zhang$^1$, \and 
	Xiatian Zhu$^2$, \and
        Jiafeng Feng$^1$, \and
        Tao Xiang$^2$, \and
        Li Zhang$^1$\textsuperscript{\faEnvelopeO}
}




\institute{
	Corresponding author: Li Zhang  \at
             \email{lizhangfd@fudan.edu.cn}          \\
$^1$ School of Data Science, Fudan University, Shanghai, China \\
$^2$ University of Surrey, Guildford, UK
}

\date{15 March 2024}

\maketitle

\begin{abstract}
Vision transformers (ViTs) have pushed the state-of-the-art for visual perception tasks.
The self-attention mechanism underpinning the strength of ViTs has a quadratic complexity in both computation and memory usage. 
This motivates the development of approximating the self-attention at linear complexity.
However, an in-depth analysis in this work reveals that existing methods are either theoretically flawed or empirically ineffective for visual recognition. 
We identify that their limitations are rooted in the inheritance of {\em softmax} based self-attention during approximations, that is, normalizing the scaled dot-product between token feature vectors using the softmax function.
As preserving the softmax operation challenges any subsequent linearization efforts. 
By this insight, a family of {\em SOftmax-Free Transformers} ({\bf SOFT}) are proposed.
Specifically,
a Gaussian kernel function is adopted to replace the dot-product similarity, enabling a full self-attention matrix to be approximated under low-rank matrix decomposition. 
For computational robustness, we estimate the Moore-Penrose inverse using an iterative Newton-Raphson method in the forward process only, while calculating its theoretical gradients only once in the backward process.
To further expand applicability (\eg, dense prediction tasks), an efficient symmetric normalization technique is introduced.
Extensive experiments on ImageNet, COCO and ADE20K show that our SOFT significantly improves the computational efficiency of existing ViT variants.
With linear complexity, much longer token sequences are permitted by SOFT, resulting in superior trade-off between accuracy and complexity.
Code and models are available at \url{https://github.com/fudan-zvg/SOFT}.
\keywords{Transformer \and linear complexity \and softmax normalization \and softmax-free \and Gaussian attention}
\end{abstract}
\section{Introduction}\label{intro}
\begin{figure*}[htp]
    \centering
    \subfloat[\label{fig:paramettop1}]{
         \includegraphics[width=0.45\textwidth]{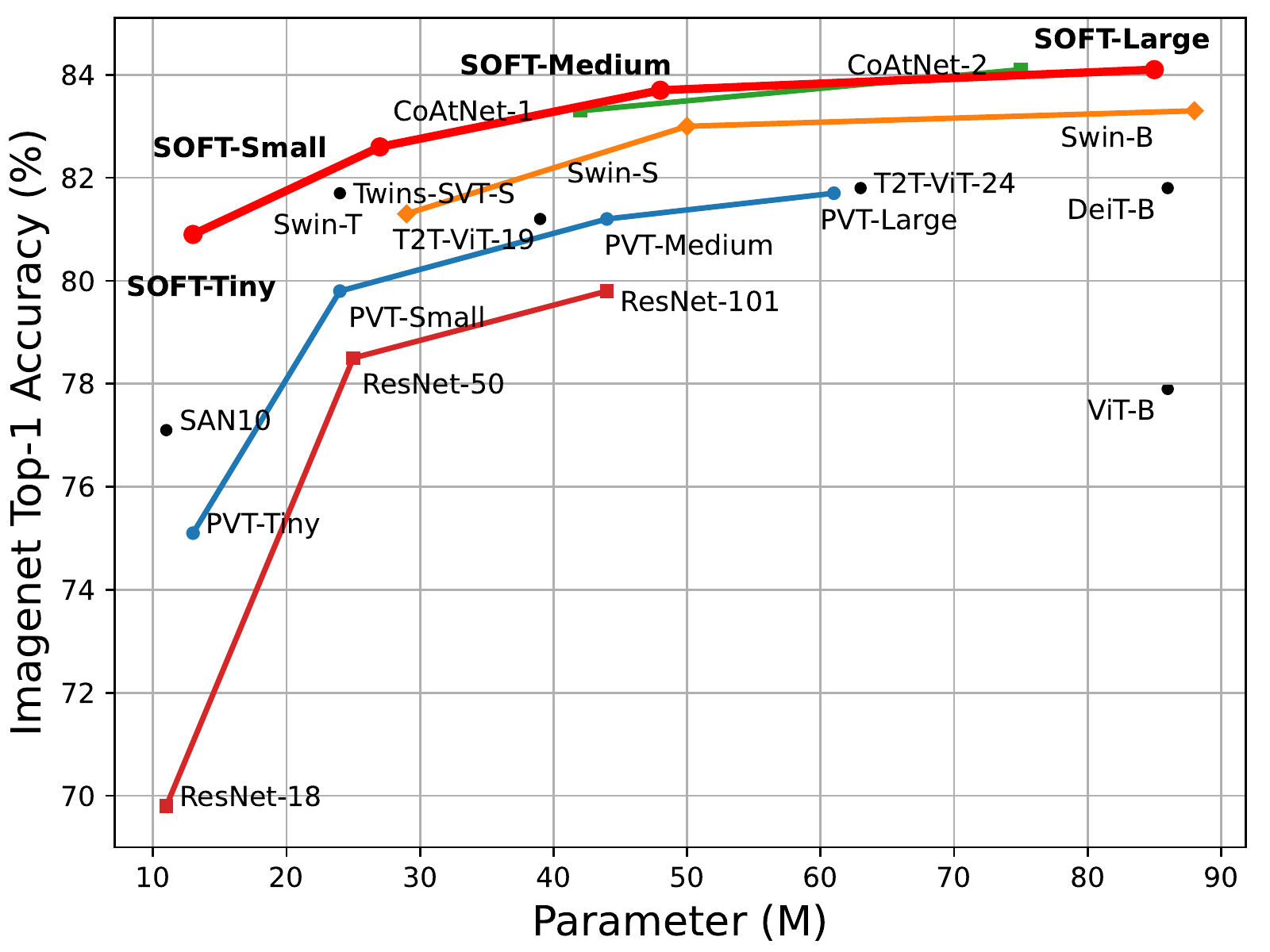}
    }
    \hspace{1.5em}
    \subfloat[\label{fig:formercomparison}]{
        \includegraphics[width=0.45\textwidth]{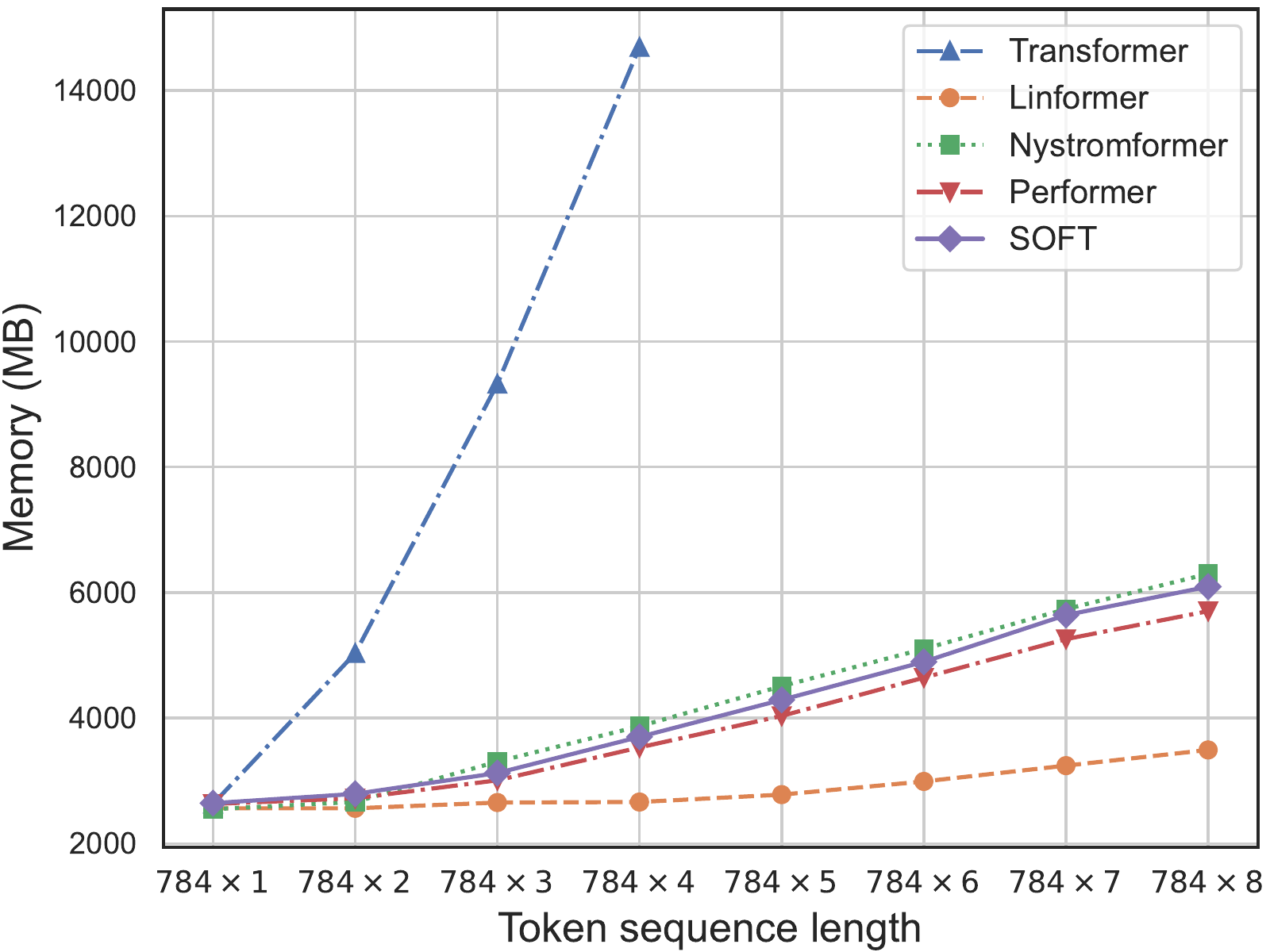}
    }
    \caption{Top-1 classification accuracy on ImageNet~\citep{deng2009imagenet} validation set with respect to parameters and the memory usage corresponding to the token sequence length in practice compared to other methods. 
    (a) Comparison with CNN models:
    ResNet~\citep{he2016deep} and 
    CoAtNet~\citep{dai2021coatnet}
    Transformer models: 
    PVT~\citep{wang2021pyramid}, 
    Swin~\citep{liu2021swin},
    DeiT~\citep{touvron2021training},
    ViT~\citep{dosovitskiy2020image}, 
    T2T-ViT~\citep{yuan2021tokens}, 
    Twins-SVT~\citep{chu2021twins} and
    SAN10~\citep{zhao2020exploring};
    (b) Comparison with Transformer~\citep{vaswani2017attention},
    Linformer~\citep{wang2020linformer},
    Nystr{\"o}former~\citep{xiong2021nystr} 
    and Performer~\citep{choromanski2020rethinking}. 
    The memory usage is measured with a batch size of 1 on a 16GB Tesla V100.}
\end{figure*}

Recently the step change brought by Transformers \citep{vaswani2017attention} in natural language processing (NLP) \citep{devlin2018bert,brown2020language} seems to have arrived in vision \citep{dosovitskiy2020image,yuan2021tokens,zhu2020deformable,zheng2021rethinking}.  Indeed, with less 
 inductive bias in its architecture design than Convolution neural networks (CNNs), pure Vision Transformer (ViT) \citep{dosovitskiy2020image} and its variants have shown to be able to outperform CNNs on various vision tasks \citep{d2021convit,jaegle2021perceiver}.
However, there is a bottleneck in any Transformer based model, namely its quadratic complexity in both computation and memory usage. This is intrinsic to the self-attention mechanism:
given a sequence of tokens (\eg, words or image patches) as input, the self-attention module iteratively learns the feature representations
by relating one token to all other tokens. This results in a quadratic complexity $O(n^2)$ with the token sequence length $n$
in both computation (time) and memory (space) since an $n\times n$ sized attention matrix needs to be computed and saved during inference. This problem is particularly acute in vision: a 2D image after tokenization will produce a far longer sequence than those in NLP even with a moderate spatial resolution. This quadratic complexity thus prevents a ViT model from modeling images at high spatial resolutions, which are often crucial for visual recognition tasks. 

A natural solution is to reduce  the complexity of self-attention computation via approximation. Indeed, 
there have been a number of attempts in NLP \citep{wang2020linformer,choromanski2020rethinking,kitaev2020reformer,xiong2021nystr}. For example, \citep{wang2020linformer} takes a naive approach by shortening the length of Key and Value via learnable projections. Such a coarse approximation would inevitably cause performance degradation. 
In contrast, \citep{choromanski2020rethinking,katharopoulos2020transformers} both leverage the kernel mechanism to approximate softmax normalization to linearize the computation in self-attention. 
\citep{kitaev2020reformer} instead adopts a hashing strategy to selectively compute the most similar pairs.
Recently, \citep{xiong2021nystr} uses Nystr{\"o}m matrix decomposition to reconstruct the full attention matrix
with polynomial iteration for approximating the pseudo-inverse of the landmark matrix. %
Nonetheless, softmax normalization is
simply duplicated across the matrix decomposition process,
which is  theoretically unsound. We empirically found  that none of these methods are effective when applied to vision (see Sec. \ref{sec:competitiors}). 

In this work, we identify that the  limitations of existing efficient Transformers are caused by the use of {\em softmax self-attention}, and for the first time propose a softmax-free Transformer. More specifically, in all existing Transformers (with or without linearization), 
a softmax normalization is needed on top of scaled dot-product  between token feature vectors \citep{vaswani2017attention}. Keeping this softmax operation challenges any subsequent linearization efforts.
To overcome this obstacle,  we introduce a novel 
{\em softmax-free self-attention} mechanism, named as SOFT,
with linear complexity $O(n)$ in both space and time.
Specifically, SOFT uses Gaussian kernel to define 
the similarity (self-attention) function without the need
for subsequent softmax normalization. With this softmax-free attention matrix, we further introduce a novel low-rank matrix decomposition algorithm for approximation. The robustness of the approximation is theoretically guaranteed by employing a Newton-Raphson method for reliably
computing the Moore-Penrose inverse of the matrix.

We make the following {\bf contributions}.
{\bf (I)} We introduce a novel {\em softmax-free Transformer} with linear space and time complexity.
{\bf (II)} Our attention matrix approximation is achieved  through a novel matrix decomposition algorithm with theoretical guarantee.
{\bf (III)} To evaluate our method for visual recognition tasks,
we design a family of generic backbone architectures
with varying capacities using SOFT as the core self-attention component.
Extensive experiments show that with a linear complexity (Figure \ref{fig:formercomparison}), our SOFT models  can take in as input much longer image token sequences. As a result, with the same model size, our SOFT outperforms the state-of-the-art CNNs and ViT variants on ImageNet \citep{deng2009imagenet} classification in the accuracy/complexity trade-off (Figure \ref{fig:paramettop1}).

\rerevision{
A preliminary version of this work was presented in NeurIPS 2021 spotlight \citep{lu2021soft}.
In this paper, we have further extended our conference version as follows:
(i) We improve the efficiency and robustness in computing Moore-Penrose inverse by using an iterative method in the forward process only while calculating its theoretical gradient only once in the backward propagation.
(ii) We analyze the limitations of our preliminary SOFT from a matrix spectral norm perspective, 
revealing the importance of normalization for enhancing the model's task generalizability.
(iii) We prove that the preliminary SOFT experiences a second-order increase in matrix spectral norm relative to the matrix size, making it fail in dense vision problems.
(iv) To address these limitations, we propose a normalized softmax-free self-attention,
\major{keeping linear complexity while enhancing performance, supported by both theoretical proof and extensive experiments}.
(v) The improved SOFT outperforms the state-of-the-art CNNs and ViTs for classification on ImageNet \citep{deng2009imagenet}, object detection on COCO \citep{lin2014microsoft}  and semantic segmentation on ADE20K \citep{zhou2019semantic}.
}
\section{Related work}

\subsection{Vision Transformers}
There is a surge of research interests recently in
exploiting Transformers for visual recognition tasks
\citep{wang2018non,wang2021pyramid, guo2022beyond,yuan2021tokens,touvron2021training,zhang2020dynamic},
inspired by their remarkable success in NLP
\citep{vaswani2017attention,devlin2018bert,brown2020language}.
Core to these NLP and vision transformers is the same self-attention mechanism \citep{vaswani2017attention}
that computes a self-attention matrix by exhaustively comparing token pairs.
This means a quadratic complexity with the sequence length in both space and time, which thus limits the scalability of Transformers in dealing with long sequences. This limitation is more serious in vision than NLP:
To process an image with at least thousands of pixels,
patch-wise tokenization is a must for Transformers to control the computational cost. 
Given higher resolution images, the patch size also needs to be enlarged proportionally sacrificing the spatial resolution.
This limits the capability of Transformers, \eg, learning fine-grained feature representation as required in many visual recognition tasks.

\subsection{Linear Transformers} 
Recently, there have been a number of linear/efficient variants \citep{choromanski2020rethinking,wang2020linformer,katharopoulos2020transformers,kitaev2020reformer,tay2023efficient, peng2021random, kasai2021finetuning} of Transformers in NLP.
For example, \citep{wang2020linformer} learns to shrink the length of Key and Value based on a low-rank assumption.
\citep{kitaev2020reformer} adopts a hashing strategy to selective the most similar pairs
and only compute attention among them.
\citep{choromanski2020rethinking,katharopoulos2020transformers} utilize different kernel functions for approximating softmax-based self-attention matrix. 
\citep{peng2021random} applies random feature mapping on the sequences to approach the original softmax function. \citep{kasai2021finetuning} decreases the time and memory consumption of the attention matrix by replacing the softmax function with its linear-complexity recurrent alternative. 
When applied to visual recognition tasks, however, 
we show that these models
have considerable performance degradation
compared to the standard Transformers \citep{vaswani2017attention} (see Sec. \ref{sec:competitiors}).

The most related work to SOFT is \citep{xiong2021nystr} which
uses the Nystr{\"o}m matrix decomposition to avoid computing the full attention matrix.
However, this method suffers from several theoretical defects: (1)
As the standard self-attention needs to apply row-wise softmax normalization on the full attention matrix, a direct application 
of matrix decomposition is infeasible.
As a workaround without solid theoretical support, softmax is simply applied 
to all the ingredient matrices in \citep{xiong2021nystr}. Such an approximation is not guaranteed theoretically.
(2) With a polynomial iteration
method, 
it is not guaranteed that the generalized attention matrix inverse
can be computed when the matrix
is a nearly singular one in practice. 
In contrast to all the above methods, in this paper we propose a {\em softmax-free} self-attention mechanism that facilitates matrix decomposition for 
complexity minimization with theoretical guarantees.

\section{Method}
\label{method}
\subsection{Softmax-free self-attention formulation \label{sec:formulation}}

A schematic illustration of our model is given in Figure \ref{fig:Structure}. Let's first look at our attention module design.  Given a sequence of $n$ tokens $X\in\mathbb{R}^{n\times d}$ with each token represented by a  $d$-dimensional feature vector, self-attention \citep{vaswani2017attention} aims to discover the correlations of all token pairs exhaustively.

Formally, $X$ is first linearly projected into three $d_e$-dimensional spaces  (\texttt{query}, \texttt{key}, and \texttt{values}) as: 
{\small
\begin{equation} \label{eq:linear}
    Q=XW_q \in\mathbb{R}^{n\times d_e}, 
    K = X W_k \in\mathbb{R}^{n\times d_e}, 
    V = X W_v \in\mathbb{R}^{n\times d_e},
\end{equation}}
where $W_q, W_k, W_v \in \mathbb{R}^{d\times d_e}$
are learnable matrices.
\major{For equation simplicity, we omit the multi-head notation in self-attention operations.
However, it should be noted that multi-head mechanisms are employed throughout.}
Self-attention can be expressed in a generic formulation as:
\begin{equation}
    y_{i,:} = \sum_{j=1}^n \alpha(Q_{i,:}, K_{j,:}) \odot V_{j,:},
\end{equation}
where $\odot$ is the Hadamard product, and $i,j \in \{1,\cdots,n\}$ index the tokens.
The key self-attention function $\alpha:\mathbb{R}^{d_e} \times \mathbb{R}^{d_e} \rightarrow \mathbb{R}$ is composed of a nonlinear function $\beta:\mathbb{R} \rightarrow \mathbb{R}$ and a relation function $\gamma:\mathbb{R}^{d_e} \times \mathbb{R}^{d_e} \rightarrow \mathbb{R}$.
A dominant instantiation of $\alpha$
is the scaled dot-product based softmax self-attention \citep{vaswani2017attention}, defined as
\begin{equation}\label{eq:softmax-attn}
\beta(\cdot) = \text{softmax}(\cdot), \quad \gamma(Q_{i,:}, K_{j,:}) = \frac{1}{\sqrt{d_e}}\cdot Q_{i,:}^\top K_{j,:}.
\end{equation}
Whilst this softmax self-attention
has been the {\em de facto} choice and seldomly questioned, as discussed earlier it is not necessarily suited for linearization.
To facilitate the design of linear self-attention, we introduce a softmax-free self-attention function with the dot-product replaced by a Gaussian kernel as:
\begin{equation}\label{eq:softmax-free-attn}
\beta'(\cdot) = \text{exp}(\cdot), \quad
\gamma'(Q_{i,:}, K_{j,:}) = -\frac{1}{2\sqrt{d_e}}\cdot \| Q_{i,:} - K_{j,:}\|_2^2.
\end{equation}
\major{A dissection of Transformers by Tsai et al. \citep{tsai2019transformer} reveals negligible performance differences between asymmetric and symmetric kernels. 
To maintain the symmetric properties of the attention matrix as defined in Eq. \eqref{eq:softmax-attn}, we opt for identical projection matrices \( W_q \) and \( W_k \) in Eq. \eqref{eq:linear}, effectively setting \( Q = K \).
To further investigate the impact of symmetric kernels, additional experiments are conducted in our ablation studies of Section \ref{sec: abl}.}
Our self-attention matrix is then written as:
\begin{equation} \label{eq:soft_attn_func}
    S_{i,j} = \text{exp}\left(-\frac{1}{2\sqrt{d_e}}\cdot \| Q_{i,:} - K_{j,:}\|_2^2\right).
\end{equation}
For notation simplicity, we define the matrix formulation as:
$S = \text{exp}\left(Q\ominus K \right)$.

\noindent{\bf Remarks:}
Our self-attention matrix $S$ has three important properties: 
(1) It is symmetric;
(2) All the elements lie in a unit range of $[0,1]$; 
(3) All diagonal elements hold the largest value $1$ (self-reinforced), with
the bottom ones (corresponding to most dissimilar token pairs)
being close to $0$.
As Gaussian kernel is a positive definite kernel \citep{fasshauer2011positive},
$S$ is deemed a Gram matrix.
However, we find that when using our kernel-based self-attention matrix $S$ without linearization, 
the training of a transformer fails to converge.
\extension{
More discussion can be found in Section \ref{sec:normalization}.}

\begin{figure*}[t]\centering
\includegraphics[width=1.0\linewidth]{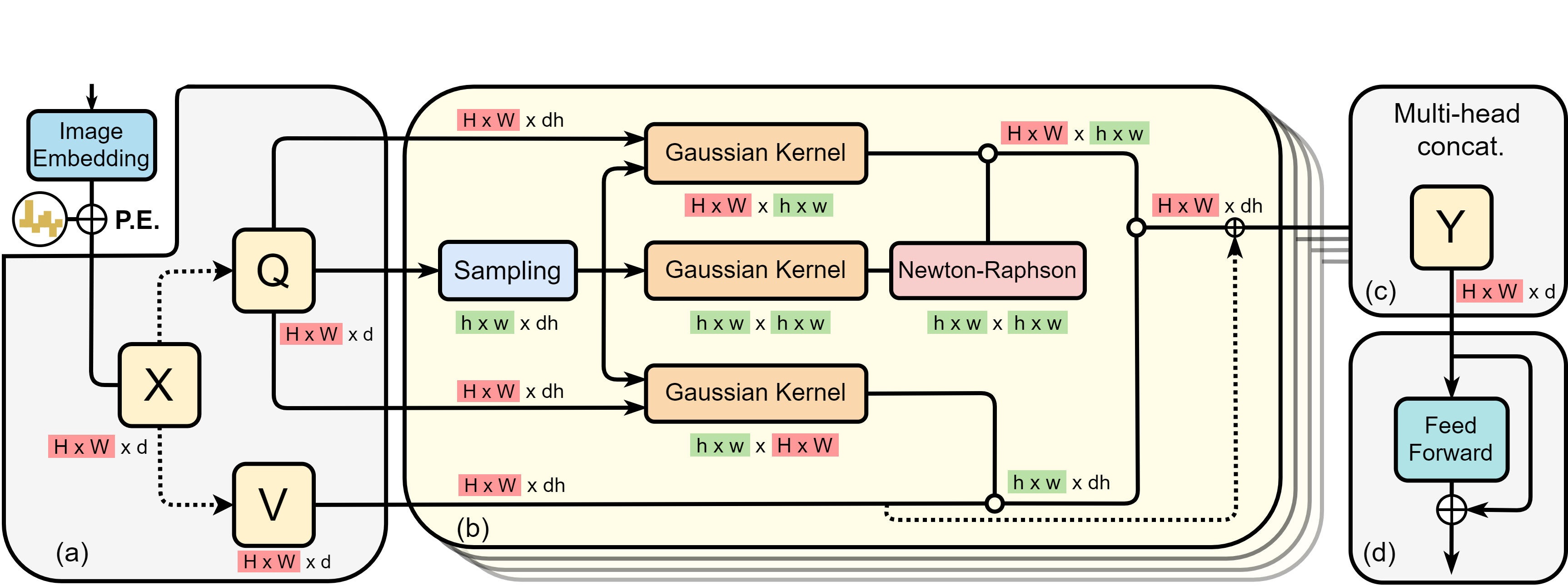}
\caption{Schematic illustration of the proposed {\em softmax-free self-attention} (SOFT) method. 
\texttt{P.E.}: Position embedding.
Dash lines: linear projection. 
\texttt{dh}: the hidden dim of each attention head. 
$\circ$ denotes the matrix dot product.}
\label{fig:Structure}
\end{figure*}

\subsection{Low-rank regularization via matrix decomposition with linear complexity}

To solve the convergence and quadratic complexity problems,
we leverage matrix decomposition as a unified solution 
with low-rank regularization.
In particular, we consider Nystr{\"o}m \citep{NIPS2000_nyst},
which is originally a low-rank matrix approximation algorithm.
This enables our model's complexity to be reduced significantly without computing the full self-attention matrix $S$.

We make this choice because our $S$ is positive semi-definite (\ie, a Gram matrix) without follow-up normalization which are all necessary conditions for Nystr{\"o}m.
In contrast, \citep{xiong2021nystr} totally ignores these requirements, leading to theoretical flaw in its approximation.

To define the  Nystr{\"o}m method formally,  let us express $S=\text{exp}\left(Q\ominus K \right)$ as a block matrix:
\begin{equation}
   S=\left[\begin{array}{cc}
    A & B \\
    B^\top & C \end{array}
    \right] \in \mathbb{R}^{n \times n},
\end{equation}
where $A\in \mathbb{R}^{m \times m}$, $B\in \mathbb{R}^{m \times (n-m)}$, $C \in \mathbb{R}^{(n-m) \times (n-m)}$ with $m \ll n$.
Through Nystr{\"o}m decomposition (see derivative details in 
Appendix \ref{sec:nystrom}),
an approximation can be represented as:
\begin{equation}
   \hat{S}=\begin{bmatrix}
    A \\
    B^\top
    \end{bmatrix}
        A^\dagger
    \begin{bmatrix}
        A & B
    \end{bmatrix} 
    = P^\top A^{\dagger} P, \quad \text{where} \quad
    P = 
    \begin{bmatrix}
    A & B
    \end{bmatrix},
\end{equation}
and $A^\dagger$ is the Moore-Penrose (a generalized) inverse of $A$.

\begin{algorithm}
\SetAlgoLined
\KwIn{$Q\in \mathbb{R}^{n\times d_e}$, sampling function $f_s$}
\KwSty{Sampling}  $\ \widetilde{Q} \leftarrow f_s(Q)$ \;
$A\leftarrow\text{exp}(\widetilde{Q}\ominus \widetilde{Q})$, $P\leftarrow\text{exp}(\widetilde{Q}\ominus Q)$\;
$\hat{S}\leftarrow P^\top \text{NR}(A) P$\;
\KwOut{$\hat{S}$}
 \caption{SOFT: Softmax-free attention}
 \label{soft}
\end{algorithm}
\begin{algorithm}
\SetAlgoLined
\KwIn{$A\in \mathbb{R}^{m\times m}$, and $\mathcal{T}\in \mathbb{Z}^+$}{$\alpha= 2/\|A\|_1^2$.Initialize $A_0\leftarrow \alpha A$}\;
\For{$k$ from $1$ to $\mathcal{T}$ }{
$A_{k}\leftarrow 2A_{k-1}-A_{k-1}AA_{k-1}$
}
\KwOut{$A_\mathcal{T}$}
 \caption{NR: Newton-Raphson iteration}
 \label{NR}
\end{algorithm}
\noindent{\bf Sampling: }
In the standard Nystr{\"o}m formulation, $A$ and $B$ are sub-matrices of $S$ obtained by randomly sampled $m$ tokens,
denoted as $\widetilde{Q}$. 
We call the sampled $\widetilde{Q}$ as {\em bottleneck tokens}.
However, we find empirically that random sampling is considerably sensitive to the choice of $m$.
We hence explore two additional options to leverage the structural prior of visual data:
(1) Using one convolutional layer with kernel size $k$ and stride $k$
to learn $\widetilde{Q}$,
and 
(2) Using average pooling with kernel size $k$ and stride $k$
to generate $\widetilde{Q}$.
For both, we need to reshape $Q$ to the form of $\mathbb{R}^{H \times W \times d_e}$.
Each slide of convolution or pooling produces a token.
We set $k$ according to the length of $Q$
such that $m$ tokens can be obtained.
Our experiments show that a convolution layer performs better in accuracy. 
We therefore use a convolution layer 
by default.

As $K$ is identical to $Q$, we have $\widetilde{K} = \widetilde{Q}$.
Given these $m$ tokens,
we then compute \textit{bottleneck attention} $A$ and $P$ as:
\begin{equation}
    A=\text{exp}(\widetilde{Q}\ominus \widetilde{K}), \quad
    P=\text{exp}(\widetilde{Q}\ominus K).
\end{equation}
We finally obtain a regularized self-attention matrix $\hat{S}$ of SOFT as:
\begin{equation} \label{eq:reg_attn}
    \hat{S} = \text{exp}\left(Q\ominus \widetilde{K}\right) \left(\text{exp}\left(\widetilde{Q}\ominus \widetilde{K}\right)\right)^{\dagger}
    \text{exp}\left(\widetilde{Q}\ominus K\right).
\end{equation}
The overall SOFT is summarized in Algorithm \ref{soft}.
The {\em low-rank} regularization is conducted as follows.
For computing the attention score
between any two tokens, 
we first correlate each of them with sampled tokens
using our self-attention function (Equation (\ref{eq:soft_attn_func})); With this correlation representation we then compute their similarity under the modulation of the generalized inverse of $\widetilde{Q}$'s correlation matrix.
Similar as the standard Nystr{\"o}m,
our design associates the input tokens w.r.t. a small space spanned by a set of sampled tokens, giving a proper estimation of the original attention relationships subject to a low-rank constraint. This method is proved in 
Appendix \ref{sec:nystrom}.

\noindent{\bf Moore-Penrose inverse:}
An accurate and commonly used way to calculate the Moore-Penrose inverse is to use Singular Value Decomposition (SVD). 
Given $A\in \mathbb{R}^{m\times m}$ and its SVD form $A = U\Sigma V^\top$ where $U,V$ are $m\times m$ unitary matrices and $\Sigma$ is a $m\times m$ diagonal matrix, the Moore-Penrose inverse of $A$ is $A^\dagger = V\Sigma^\dagger U^\top$.
Nevertheless, SVD is not friendly to the training process on GPU hence
harming the model training efficiency.
To solve this issue, we adopt 
the Newton–Raphson method.
It is an iterative algorithm with the $(k+1)$-th iteration formulated given the previous iteration as: 
\begin{equation}
    A_{k+1} = 2A_k - A_k A A_k, \quad \text{and} \quad A_0 = \alpha A. 
    \label{eq: newton}
\end{equation}
We now prove that $A_k$ finally converges to Moore-Penrose inverse of $A_{m\times m}$, if $\alpha$ is sufficiently small \citep{ben1966iterative}. 
\begin{proposition}
When $\alpha$ is sufficiently small, $A_{k+1}=2A_k-A_k A A_k$,  $A_k $ converges to $A^{\dagger}$.
\label{theo}
\end{proposition}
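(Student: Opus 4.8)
The plan is to reduce the matrix recurrence to a family of decoupled scalar recurrences by diagonalizing in the eigenbasis of $A$. The crucial enabling fact, established in the Remarks above, is that $A = \exp(\widetilde{Q}\ominus\widetilde{Q})$ is a symmetric positive semi-definite Gram matrix, so it admits an eigendecomposition $A = U\Sigma U^\top$ with $U$ orthogonal and $\Sigma = \mathrm{diag}(\sigma_1,\dots,\sigma_m)$, $\sigma_i \ge 0$. Since $A_0 = \alpha A = U(\alpha\Sigma)U^\top$ shares these eigenvectors, I expect the entire iterate sequence to remain simultaneously diagonalizable by $U$, which is what collapses the problem to scalars.

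First I would prove by induction that $A_k = U D_k U^\top$ for a diagonal matrix $D_k$. The base case is $D_0 = \alpha\Sigma$. For the inductive step, substituting $A_k = U D_k U^\top$ and $A = U\Sigma U^\top$ into $A_{k+1} = 2A_k - A_k A A_k$ and using $U^\top U = I$ gives $A_{k+1} = U(2D_k - D_k \Sigma D_k)U^\top$, so $D_{k+1} = 2D_k - D_k\Sigma D_k$ is again diagonal. Because $D_k$ and $\Sigma$ are both diagonal, this splits into independent scalar recurrences $d_{k+1} = 2 d_k - \sigma d_k^2$ with $d_0 = \alpha\sigma$, one per eigenvalue $\sigma$.

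Next I would analyze each scalar recurrence through the residual $e_k := 1 - \sigma d_k$. A direct computation yields $e_{k+1} = 1 - \sigma(2 d_k - \sigma d_k^2) = (1 - \sigma d_k)^2 = e_k^2$, hence $e_k = e_0^{2^k} = (1 - \alpha\sigma^2)^{2^k}$, which is the characteristic quadratic convergence of the Newton--Raphson scheme. For a nonzero eigenvalue $\sigma > 0$, $e_k \to 0$ precisely when $|1 - \alpha\sigma^2| < 1$, i.e. $0 < \alpha < 2/\sigma^2$, and in that regime $d_k \to 1/\sigma$. Choosing $\alpha$ sufficiently small---concretely any $\alpha < 2/\|A\|_2^2$, which the algorithm's choice $\alpha = 2/\|A\|_1^2$ guarantees because $\|A\|_2 = \rho(A) \le \|A\|_1$ for symmetric $A$---makes this hold simultaneously for every nonzero $\sigma$, the tightest constraint coming from the largest one $\sigma = \|A\|_2$. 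For a zero eigenvalue, $d_0 = \alpha\sigma = 0$ and the recurrence keeps $d_k = 0$ for all $k$, matching the Moore--Penrose convention that zero singular values map to zero.

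Finally I would reassemble the pieces: $D_k \to \Sigma^\dagger$, the diagonal matrix that inverts the nonzero $\sigma_i$ and fixes the zeros, so $A_k = U D_k U^\top \to U\Sigma^\dagger U^\top$. To close the argument I would confirm this limit is exactly $A^\dagger$, either by recalling $A^\dagger = U\Sigma^\dagger U^\top$ for the symmetric SVD or by verifying the four Penrose conditions directly. I expect the main obstacle to be the inductive diagonalization step, since it rests entirely on the symmetry of $A$ together with $A_0$ being a scalar multiple of $A$: these are what force all iterates to share the eigenbasis $U$ and let the recurrence decouple. Without the Gram/positive semi-definite structure the scalar reduction would break down and the clean quadratic residual contraction $e_{k+1} = e_k^2$ would not be available.
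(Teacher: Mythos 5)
Your proof is correct, but it takes a genuinely different route from the paper's. The paper argues in operator form: it first observes that $A_0=\alpha A$ forces every iterate to satisfy $A_k = C_kA = AD_k$, hence $A^{\dagger}AA_k=A_k$ and $A_kAA^{\dagger}=A_k$, and from this derives the exact error recursion $A^{\dagger}-A_{k+1}=(A^{\dagger}-A_k)A(A^{\dagger}-A_k)$; left-multiplying by $A$ and taking norms gives $\|AA^{\dagger}-AA_{k+1}\|\leq\|AA^{\dagger}-AA_k\|^2$, so once $\|AA^{\dagger}-AA_0\|<1$ the projected error squares to zero and the un-projected bound forces $A_k\to A^{\dagger}$. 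You instead diagonalize $A=U\Sigma U^{\top}$, show all iterates stay simultaneously diagonalizable, and reduce to the scalar Newton recursion $e_{k+1}=e_k^2$ with $e_0=1-\alpha\sigma^2$ per eigenvalue. Your version buys an explicit closed form for the error, a transparent treatment of the null space (zero eigenvalues stay fixed at zero, matching the pseudo-inverse convention), and the exact convergence threshold $\alpha<2/\|A\|_2^2$; its cost is that it leans entirely on symmetry, whereas the paper's range-space argument is the one that generalizes to non-symmetric $A$ (as in Ben-Israel's original analysis). One small caveat on a side remark of yours: the claim that $\alpha=2/\|A\|_1^2$ \emph{guarantees} $\alpha<2/\|A\|_2^2$ fails in the degenerate case $\|A\|_1=\|A\|_2$ (e.g.\ $A=I$, where $e_0=-1$ and the iteration collapses to zero after one step); this does not affect the proposition itself, which only asserts convergence for sufficiently small $\alpha$, but the strict inequality deserves a word.
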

\major{The proposition is proved in Appendix \ref{suppsec:newton}.}
Though $\alpha =2/\|A\|_1^2$ which ensures good convergence behavior in Algorithm \ref{NR},
in practice, we find that using an alternative form gives more stable training and faster convergence.
Specifically, in $\|I-A\frac{2\beta^{n}}{\|A\|_1^2}\|_1\leq1$ where $\beta$ equals to $0.5$, we find the smallest $n_{i}$ that holds this inequality. Then, we initialize $\alpha$ as $\alpha=\frac{2\beta^{n_i}}{\|A\|_1^2}$.

The following proposition comes with the proof of Proposition \ref{theo}:
\begin{proposition}
\label{pro:AAkA-A}
$\|A A_k A - A\|$ and $\|A_k - A^{\dagger}\|$ decreases to $0$ monotonously, if $\alpha$ is sufficiently small.
\end{proposition}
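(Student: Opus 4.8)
The plan is to recycle the error-matrix recursion established in the proof of Proposition~\ref{theo} and convert it into contraction estimates for the two quantities at hand. Following that proof, I set $E_k = A^{\dagger}-A_k$ and $F_k = AA^{\dagger}-AA_k = AE_k$. The two facts I borrow are the quadratic identity $E_{k+1}=E_kAE_k$, which gives $F_{k+1}=AE_{k+1}=(AE_k)(AE_k)=F_k^2$, and the initialization guarantee that, for $\alpha$ sufficiently small, $\|F_0\|<1$. From $F_{k+1}=F_k^2$ it follows that $\|F_{k+1}\|\le\|F_k\|^2$, hence $\|F_k\|\le\|F_0\|^{2^k}$; in particular $\|F_k\|<1$ for every $k$ and $\|F_k\|\to0$. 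This uniform bound $\|F_k\|<1$ is the engine behind both monotonicity claims.

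For $\|A_k-A^{\dagger}\|=\|E_k\|$, I would take norms in $E_{k+1}=E_kAE_k=E_k(AE_k)=E_kF_k$, obtaining $\|E_{k+1}\|\le\|E_k\|\,\|F_k\|$. Since $\|F_k\|<1$, this yields $\|E_{k+1}\|<\|E_k\|$ (strict unless $E_k=0$), i.e.\ monotone decrease, and since $\|F_k\|\to0$ the product drives $\|E_k\|\to0$. For $\|AA_kA-A\|$, the first move is to invoke the defining Moore--Penrose property $AA^{\dagger}A=A$ to rewrite the quantity as a conjugated error, $G_k:=AA_kA-A=AA_kA-AA^{\dagger}A=-AE_kA=-F_kA$. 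Then, using $F_{k+1}=F_k^2$ together with $F_kA=-G_k$, I derive the recursion $G_{k+1}=-F_{k+1}A=-F_k^2A=-F_k(F_kA)=F_kG_k$, so that $\|G_{k+1}\|\le\|F_k\|\,\|G_k\|$. Exactly as before, $\|F_k\|<1$ forces monotone decrease and $\|F_k\|\to0$ forces convergence to $0$, as claimed in Eq.~\eqref{eq: newton}'s iteration.

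The main obstacle is conceptual rather than computational: monotonicity, as opposed to mere convergence, hinges entirely on having $\|F_k\|<1$ for \emph{all} $k\ge0$, which in turn rests on securing $\|F_0\|<1$ through the small-$\alpha$ choice inherited from Proposition~\ref{theo}; the norm must be sub-multiplicative, and both the spectral and Frobenius norms qualify. The only algebraic care required is the correct deployment of the pseudo-inverse identities $AA^{\dagger}A=A$ and $A^{\dagger}AA_k=A_k=A_kAA^{\dagger}$ to collapse $AA_kA-A$ into $-AE_kA$ and to confirm $F_{k+1}=F_k^2$; once these are in place, both estimates are one-line consequences of the single contraction factor $\|F_k\|$.
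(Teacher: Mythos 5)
Your proposal is correct and follows essentially the same route as the paper: both arguments multiply the quadratic error identity $A^{\dagger}-A_{k+1}=(A^{\dagger}-A_k)A(A^{\dagger}-A_k)$ by $A$ to obtain the contraction $A-AA_{k+1}A=(AA^{\dagger}-AA_k)(A-AA_kA)$, and both drive the monotone decrease of $\|A_k-A^{\dagger}\|$ and $\|AA_kA-A\|$ from the single fact that $\|AA^{\dagger}-AA_k\|<1$ for all $k$ and tends to $0$. Your writeup is in fact slightly more explicit than the paper's, since you spell out $\|F_k\|\le\|F_0\|^{2^k}$ rather than merely citing the bound from the earlier proposition.
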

\begin{proof}
Note that when we multiply $A$ on both sides of  \eqref{eq: iter}, the equation turns to be:
\begin{equation}
\begin{aligned}
        A-A A_{k+1}A &= A(A^{\dagger}-A_k)A(A^{\dagger}-A_k)A\\
        &=(AA^{\dagger}-AA_k)(A-A A_k A).
        \label{eq:norm}
\end{aligned}
\end{equation}
    Similarly norm both sides of  \eqref{eq:norm}, considering that $\|A A^\dagger - A A_{k}\|\rightarrow 0$ and $\|A A^\dagger - A A_{k}\|<1$ always holds, $\|A - A A_{k}A\|$ monotonically decreases to $0$. The inequality \eqref{converge} implies that $\|A_k - A^{\dagger}\|$ decreases to $0$ monotonously .
\end{proof}
Note although $\|A - A A_{k}A\|$ monotonically decreases to $0$, $\|A_k AA_k - A_{k}\|$ cannot be proved to be so yet.

\revision{
This ensures that our estimated inverse is sufficiently accurate for matrix decomposition, subject to that our SOFT attention is regularized.
In training stage, we find that bottleneck matrix $A$ is always non-singular in practice and its inverse $A^{-1}$ thus exists.
Therefore, the iteration can be avoided in back propagation because the differential of matrix inverse can be explicitly expressed as
\begin{equation}
\label{equ:inverse_back}
    \nabla_x \mathcal{L} = -Y^\top (\nabla_Y \mathcal{L}) Y^\top, 
\end{equation}
where $X\in \mathbb{R}^{m\times m}$ is an non-singular matrix and $Y$ is the inverse of $X$, \ie, $Y=X^{-1}$, $\nabla_Y \mathcal{L}$ is the gradient of loss $\mathcal{L}$ on $Y$ and $\nabla_X \mathcal{L}$ is the gradient of loss $\mathcal{L}$ on $X$.
This can accelerate the training,
as validated in Sec. \ref{setup}.
\major{The theoretical proof is shown in Appendix \ref{suppsec:newton}.}
}

\noindent {\bf Complexity: }
We summarize the complexity of SOFT in space and time.
For {\em time complexity},
it involves:
(1) Sampling: $\mathcal{O}(n d_e)$. 
(2) Calculating three decomposed matrices: $\mathcal{O}(nmd_e+mnd_e + m^2d_e) = \mathcal{O}(2mnd_e + m^2d_e)$;
(3) Moore-Penrose inverse: $\mathcal{O}(\mathcal{T}\times m^3) = \mathcal{O}(\mathcal{T}m^3)$, where $\mathcal{T}$ is the iteration steps. 
(4) All matrix multiplication: 
$\mathcal{O}(nm^2 + mnd_e + mnd_e) = \mathcal{O}(nm^2 +2mnd_e)$. The total time complexity is
$\mathcal{O}((d_e+4md_e+m^2)n+\mathcal{T}m^3 + d_em^2)$.
The {\em space complexity} is decided by four decomposed matrices with
$\mathcal{O}(n\times m) + \mathcal{O}(m\times m) + \mathcal{O}(m\times n) + \mathcal{O}(n\times d_e)=\mathcal{O}((2m+d_e)n+m^2)$.
As we keep $m$ ($m\ll n$) a fixed constant in our model,
both time and space complexity are $\mathcal{O}(n)$, 
making SOFT a linear self-attention.

\subsection{Attention normalization \label{sec:normalization}}

\rerevision{
Whilst the above SOFT formulation is competitive for image classification, transferring the pre-trained model to  downstream tasks with different input token sequence lengths
is limited.
We conduct analysis on the model's sensitivity against input perturbation.
As suggested in \citep{yoshida2017spectral},
all parts of a model should have small \textit{spectral norm} (\ie, matrix 2-norm or the largest singular value of a matrix).
Concretely, by the triangle inequality $\|XY\|_2 \leq \|X\|_2\|Y\|_2$
where $X$ and $Y$ are any real matrix, any part with large spectral norm could lead to significant error accumulation.
This also applies for self-attention matrix, 
since it is often symmetric and non-negative definite
and its spectral norm corresponds to the largest eigenvalue.
We provide more theoretical analysis below.
Note that both scale dot-product kernel (linear kernel) and Gaussian kernel are positive definite kernels, so they have non-negative eigenvalues.
}

\begin{proposition}
In scaled dot-product based softmax self-attention, assume $\lambda_1\geq \lambda_2\geq \cdots \lambda_n\geq 0$ are eigenvalues of self-attention matrix $S_{softmax}\in\mathbb{R}^{n\times n}$, then $\lambda_1 \leq 1$.
\label{theo:softmax_eigen}
\end{proposition}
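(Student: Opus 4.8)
The plan is to extract the one structural fact that the row-wise softmax imposes on $S_{softmax}$ and to read the bound off from it directly. Writing the entries as $(S_{softmax})_{i,j} = \exp(\gamma(Q_{i,:},K_{j,:}))/\sum_{k=1}^{n}\exp(\gamma(Q_{i,:},K_{k,:}))$ with $\gamma$ the scaled dot-product of Eq.~\eqref{eq:softmax-attn}, two facts are immediate: every entry is non-negative, and every row sums to one, so that $\sum_{j=1}^{n}(S_{softmax})_{i,j}=1$ for all $i$. Thus $S_{softmax}$ is a (right) row-stochastic matrix, and this is the only property I will use; the precise form of $\gamma$ is irrelevant beyond making the exponentials positive.

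First I would bound the spectral radius. Since the maximum absolute row sum equals one, the induced norm satisfies $\|S_{softmax}\|_{\infty}=1$, and because the spectral radius is dominated by any induced matrix norm, $\rho(S_{softmax})\leq\|S_{softmax}\|_{\infty}=1$. As an alternative I could invoke the Gershgorin circle theorem: each eigenvalue lies in a disc centred at $(S_{softmax})_{i,i}$ of radius $\sum_{j\neq i}(S_{softmax})_{i,j}=1-(S_{softmax})_{i,i}$, and since this centre and radius are non-negative reals summing to one, the disc is contained in $\{z\in\mathbb{C}:|z|\leq 1\}$; hence every eigenvalue has modulus at most one. Either route yields $\rho(S_{softmax})\leq 1$.

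Then I would conclude using the hypothesis of the proposition that the eigenvalues are real and ordered as $\lambda_1\geq\cdots\geq\lambda_n\geq 0$. Under this hypothesis the largest eigenvalue is non-negative, so $\lambda_1=|\lambda_1|\leq\rho(S_{softmax})\leq 1$, which is exactly the claim.

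The main point to handle carefully --- more a subtlety than a genuine obstacle --- is that $S_{softmax}$ is in general not symmetric, because the softmax normalises each row rather than the whole matrix, so a priori its eigenvalues could be complex. The spectral-radius bound $\rho\leq\|\cdot\|_{\infty}$ is precisely what lets me bypass symmetry while still controlling $\lambda_1$. I would also remark that $\mathbf{1}=(1,\dots,1)^{\top}$ is always a right eigenvector with eigenvalue one, so the bound is in fact tight; keeping the weaker statement $\lambda_1\leq 1$ is all that the subsequent spectral-norm comparison with SOFT needs, and it remains valid even if one only asserts $\rho(S_{softmax})\leq 1$ without committing to realness of the spectrum.
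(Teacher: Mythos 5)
Your proof is correct, but it takes a genuinely different route from the paper's. The paper writes $S_{softmax}=D^{-1}A$ with $A$ the (symmetric, entrywise positive) matrix of exponentiated scaled dot-products and $D=\mathrm{diag}(A\mathbbm{1}_n)$, observes that $I-D^{-1}A$ is the random-walk graph Laplacian $L_{rw}$ of the weighted graph with adjacency $A$, and invokes the standard fact that $L_{rw}$ is positive semi-definite to conclude that every eigenvalue of $D^{-1}A$ is at most $1$. You instead use only that $S_{softmax}$ is row-stochastic, bound the spectral radius by the induced $\infty$-norm (or Gershgorin), and then read off $\lambda_1\leq 1$ from the hypothesis that the spectrum is real and non-negative. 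Your argument is more elementary and strictly more general: it needs no symmetry of $A$ (the Laplacian route implicitly relies on $Q=K$, which the paper does assume), and your observation that $\mathbf{1}$ is an eigenvector with eigenvalue $1$ shows the bound is attained. What the paper's route buys in exchange is that the realness and the lower bound $\lambda_n\geq 0$... actually only the upper bound on $L_{rw}$'s spectrum is used there too, so the paper likewise leans on the stated hypothesis for non-negativity; but the Laplacian framing does at least explain \emph{why} the eigenvalues are real (similarity to the symmetric matrix $D^{-1/2}AD^{-1/2}$), a point you correctly flag as a subtlety but leave to the hypothesis. Either proof is acceptable; yours is self-contained where the paper's defers to a cited tutorial.
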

\begin{proof}
We rewrite the softmax self-attention as
\begin{equation}
    S_{softmax} = D^{-1}A,
\end{equation}
where $A\in \mathbb{R}^{n\times n}$ is a real symmetric matrix, and $D=\text{diag}(A\mathbbm{1}_n)$. 
We consider the graph Laplacian $L$ of matrix $A$ defined by $L=D-A$, then the noramlized graph Laplacian can be expressed as
\begin{equation}
    L_{rw} = D^{-1}L = D^{-1}(D-A) = I-D^{-1}A,
\end{equation}
where, $I$ is an identity matrix.
According to \citep{von2007tutorial}, $L_{rw}$ is a real semi-definite matrix, so all the eigenvalue of $L_{rw}$ is non-negative.
Therefore, eigenvalues of $I-D^{-1}A$ are non-negative, which leads to the fact that eigenvalues $\lambda_1,\cdots, \lambda_n$ of $D^{-1}A$ is less or equal to 1.
\end{proof}
\extension{
This proposition means that softmax normalization is useful in restricting the range of self-attention matrix's eigenvalues to $[0, 1]$ and eventually the effect of error accumulation.
This is an important role \textit{softmax} operation plays in improving generalizability with standard self-attention. 
However, this is not the case for our softmax-free self-attention as formulated above.
}
\begin{proposition}
In Gaussian kernel-based self-attention, if $\lambda_1\geq \lambda_2\geq \cdots \lambda_n\geq 0$ are eigenvalues of self-attention matrix $S_{gaussian}\in\mathbb{R}^{n\times n}$, then $\lambda_1 \leq n$.
\label{theo:gaussian_eigen}
\end{proposition}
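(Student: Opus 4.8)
The plan is to avoid any delicate spectral estimate and instead read off the bound from a single trace identity, exploiting the structural properties of $S_{gaussian}$ already recorded in the Remarks following Eq.~\eqref{eq:soft_attn_func}. Two of these properties are all that is needed: first, every diagonal entry equals $1$, because with the symmetric choice $Q=K$ we have $\gamma'(Q_{i,:},K_{i,:}) = -\tfrac{1}{2\sqrt{d_e}}\|Q_{i,:}-K_{i,:}\|_2^2 = 0$, hence $(S_{gaussian})_{ii}=\exp(0)=1$; second, $S_{gaussian}$ is a Gram matrix of the positive definite Gaussian kernel, so it is symmetric positive semi-definite and its eigenvalues satisfy $\lambda_i\geq 0$ (as already assumed in the statement).

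First I would invoke the standard fact that for a symmetric matrix the trace equals the sum of eigenvalues, $\operatorname{tr}(S_{gaussian}) = \sum_{i=1}^n \lambda_i$. Next I would evaluate the same trace directly from the entries: since $(S_{gaussian})_{ii}=1$ for every $i$, we obtain $\operatorname{tr}(S_{gaussian}) = \sum_{i=1}^n (S_{gaussian})_{ii} = n$. Combining the two computations gives $\sum_{i=1}^n \lambda_i = n$.

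Finally, because positive semi-definiteness guarantees $\lambda_i\geq 0$ for all $i$, every individual eigenvalue is dominated by the total sum, and in particular the largest one obeys
\begin{equation}
\lambda_1 \leq \sum_{i=1}^n \lambda_i = \operatorname{tr}(S_{gaussian}) = n,
\end{equation}
which is exactly the claimed bound.

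I do not expect a genuine obstacle here; the only point requiring care is that the non-negativity of the eigenvalues be legitimately available, and this is precisely the Gram-matrix property established earlier through the positive definiteness of the Gaussian kernel. As an alternative route one could instead apply the Gershgorin circle theorem: each eigenvalue lies in a disc centred at $(S_{gaussian})_{ii}=1$ of radius $\sum_{j\neq i}|(S_{gaussian})_{ij}| \leq n-1$, since every entry lies in $[0,1]$, again yielding $\lambda_1 \leq 1+(n-1)=n$. The trace argument is the cleaner choice, and it also makes transparent why the softmax row-normalization in Proposition~\ref{theo:softmax_eigen} sharpens the bound from $n$ down to $1$: normalization rescales precisely the diagonal dominance that inflates the trace here.
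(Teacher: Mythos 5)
Your proof is correct and follows essentially the same route as the paper's: both evaluate $\operatorname{tr}(S_{gaussian})=n$ from the unit diagonal, identify the trace with the sum of eigenvalues, and use nonnegativity of the eigenvalues (from the positive-definite Gaussian kernel) to bound $\lambda_1$ by the trace. The extra Gershgorin remark is a valid alternative but not needed.
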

\begin{proof}
\begin{equation*}
    \sum_{i=1}^n\lambda_i = \text{Tr}(A)=n, 
\end{equation*}
since the diagonal elements of Gaussian kernel-based self-attention is always 1.
Therefore, with the fact that all the eigenvalues are positive, we have $\lambda_1\leq n$.
\end{proof}

\extension{
A larger upper bound of eigenvalue with our proposed self-attention could thus lead to less generalizability,
due to a trend of higher error accumulation.
Specifically, for a softmax-free self-attention matrix, $\hat{S} = P^\top A^\dag P$, we have }
\begin{equation}
     \|\hat{S}\|_2 = \|P^\top A^\dag P\|_2 \leq \|P\|_2^2 \|A^\dag\|_2.
\end{equation}
\extension{
For the bottleneck matrix of softmax-free attention $A\in\mathbb{R}^{m\times m}$, we assume it is $k$-connected ($k<<m$), \ie, there are $k$ fields disconnected with each other.
This is because a field represents a semantic part of an image and the number is often small.
}
\begin{proposition}
Assume the bottleneck matrix of softmax-free attention , $A\in\mathbb{R}^{m\times m}$ is $k$-connected. If $\lambda_1\geq \lambda_2\geq \cdots \lambda_m \geq 0$ are eigenvalues of $A^\dag$, then $\lambda_1 = \mathcal{O}(m^2)$ and $\|A^\dag\|_2 = \mathcal{O}(m^2)$.
\label{theo:inverse_eigen}
\end{proposition}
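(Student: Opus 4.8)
The plan is to restate the claim as a lower bound on the smallest positive eigenvalue of $A$. Because $A$ is a Gaussian Gram matrix it is symmetric positive semi-definite with unit diagonal, so $A$ and $A^\dagger$ are simultaneously diagonalizable and the nonzero eigenvalues of $A^\dagger$ are the reciprocals of those of $A$. Hence the largest eigenvalue of $A^\dagger$ is $\lambda_1 = 1/\lambda_{\min}^{+}(A)$, where $\lambda_{\min}^{+}(A)$ denotes the smallest nonzero eigenvalue of $A$, and since $A^\dagger$ is itself symmetric positive semi-definite its spectral norm equals its largest eigenvalue, giving $\|A^\dagger\|_2 = \lambda_1$. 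Both assertions $\lambda_1 = \mathcal{O}(m^2)$ and $\|A^\dagger\|_2 = \mathcal{O}(m^2)$ therefore reduce to the single estimate $\lambda_{\min}^{+}(A) = \Omega(1/m^2)$, i.e. the spectrum of $A$ may not accumulate at $0$ faster than $1/m^2$.

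Next I would use the $k$-connected hypothesis. After a symmetric permutation $A$ is block diagonal, $A = \bigoplus_{i=1}^{k} A^{(i)}$ with each $A^{(i)} \in \mathbb{R}^{m_i \times m_i}$ irreducible (connected) and $\sum_i m_i = m$, so every $m_i \leq m$. The spectrum of $A$ is the union of the block spectra, whence $\lambda_{\min}^{+}(A) = \min_i \lambda_{\min}^{+}(A^{(i)})$, and it suffices to bound the smallest positive eigenvalue of one connected block below by $\Omega(1/m^2)$. For a connected field I would pass to its weighted graph Laplacian $L^{(i)} = D^{(i)} - A^{(i)}$, exactly as in the proof of Proposition~\ref{theo:softmax_eigen}, and invoke the classical Fiedler bound: among all connected graphs on $N$ vertices the path is extremal and the algebraic connectivity is at least $2\bigl(1-\cos(\pi/N)\bigr) = 4\sin^2\bigl(\pi/(2N)\bigr) = \Theta(1/N^2)$. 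With $N = m_i \leq m$ this produces precisely the second-order scaling $\Theta(1/m^2)$, the value realized by the line-like token arrangements that arise when the sequence length grows in dense prediction.

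The crux, and the step I expect to be the main obstacle, is transferring this connectivity bound from the Laplacian $L^{(i)}$ to $\lambda_{\min}^{+}(A^{(i)})$, since $A^{(i)}$ and $L^{(i)}$ do not share eigenvectors and congruence preserves only inertia, not eigenvalue magnitudes. I would close this gap with the self-reinforced structure noted after Eq.~\eqref{eq:soft_attn_func}: inside a connected field all tokens are similar, so every entry of $A^{(i)}$ is close to $1$ and $A^{(i)} = J_{m_i} - M^{(i)}$ is a controlled perturbation of the all-ones matrix $J_{m_i}$, with $M^{(i)}$ carrying zero diagonal and small non-negative dissimilarity weights. The top eigenpair of $A^{(i)}$ is then the Perron direction (eigenvalue $\approx m_i$, eigenvector $\approx \mathbf{1}$), while on the complement $\mathbf{1}^{\perp}$ the quadratic form of $A^{(i)}$ is governed entirely by $M^{(i)}$, whose connectivity is exactly that of the field. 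The delicate point is that on $\mathbf{1}^{\perp}$ one has $x^\top A^{(i)} x = x^\top L_{M}^{(i)} x - x^\top D_{M}^{(i)} x$, so a careful accounting of the degree correction $D_{M}^{(i)}$ against the Fiedler estimate for $L_{M}^{(i)}$ is what yields the matching lower bound $\lambda_{\min}^{+}(A^{(i)}) = \Omega(1/m^2)$. Substituting back through the reduction gives $\lambda_1 = \mathcal{O}(m^2)$ and $\|A^\dagger\|_2 = \mathcal{O}(m^2)$, which is the second-order growth the subsequent normalization is designed to suppress.
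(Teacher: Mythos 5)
Your opening reduction is sound: since $A$ is symmetric PSD, $\|A^\dagger\|_2 = \lambda_1(A^\dagger) = 1/\lambda_{\min}^{+}(A)$, and the block decomposition over connected fields is fine. The genuine gap is exactly at the step you flag as the crux, and the resolution you sketch cannot close it. On $\mathbf{1}^{\perp}$ the identity $x^\top A^{(i)} x = x^\top L_M^{(i)} x - x^\top D_M^{(i)} x$ is exact, but estimating the two terms separately goes the wrong way by a factor of order $m^2$: for precisely the path-like configurations you invoke as extremal, the Fiedler value of a weighted path with typical weight $w$ is $\Theta(w/m^2)$, while $x^\top D_M^{(i)} x \geq d_{\min}\|x\|_2^2$ with $d_{\min} = \Theta(w)$, so ``Fiedler lower bound minus degree term'' is hugely negative. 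That $x^\top A x \geq 0$ holds at all is a consequence of positive semi-definiteness of the Gaussian kernel --- a global property that the Laplacian-minus-degree split discards, so no careful accounting within that split can recover it, let alone produce a margin of $\Omega(1/m^2)$. A second, independent problem is that Fiedler's bound $2\bigl(1-\cos(\pi/N)\bigr)$ is for unit edge weights; for the weighted graph $M_{ij} = 1 - A_{ij}$ the algebraic connectivity scales linearly with the weights, and bare $k$-connectedness provides no lower bound on them. Indeed, letting all tokens in one field coalesce sends $A^{(i)} \to J_{m_i}$, and $\lambda_{\min}^{+}(A^{(i)})$ tends to the token spread, which is arbitrarily small at fixed $m$; so your target inequality $\lambda_{\min}^{+}(A) = \Omega(1/m^2)$ is not derivable from connectivity alone --- some quantitative affinity floor must enter the argument.

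The paper's proof takes a different route and makes that floor explicit. It row-normalizes $A_n = D^{-1}A$ with $D = \mathrm{diag}(A\mathbbm{1}_m)$, reads the $k$-connected hypothesis as the quantitative inequality $\|I - A_n\|_1 \leq (m-k)/m$, and applies a Neumann-series perturbation bound (Lemma~\ref{lemma:inv_pert}) to obtain $\|A_n^{-1}\|_1 \leq m/k = \mathcal{O}(m)$; combining with $\|D\|_1 = \mathcal{O}(m)$ gives $\|A^{-1}\|_1 = \|A_n^{-1}D\|_1 = \mathcal{O}(m^2)$, which is transferred to the spectral norm via the elementary lemma that $\|F\|_2 \leq \|F\|_1$ for symmetric $F$. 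Note the two factors of $m$ arise from a diagonal-dominance argument in the $1$-norm (normalization bound times degree bound), not from a $1/m^2$ spectral gap of any Laplacian. To salvage your approach you would need an assumption playing the same role as $\|I - A_n\|_1 \leq (m-k)/m$ --- that is, a floor on within-field affinities strong enough to make the normalized matrix a contraction perturbation of the identity --- at which point the Neumann argument is both simpler and already the paper's proof.
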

\major{The proposition is proved in Appendix \ref{suppsec:attn_norm}.}
\extension{
This proposition indicates that 
$\|\hat{S}\|_2$ is proportional quadratically with 
the length $m$ of bottleneck token sequence.
It implies that the above SOFT formula would be limited to the applications with short bottleneck token sequences (\eg, image classification).
}

\begin{proposition}
For the bottleneck matrix of SOFT self-attention $A\in\mathbb{R}^{m\times m}$, we have
\begin{equation}
    \|D^{-1/2} A^{\dagger} D^{-1/2}\|_2 = \mathcal{O}(m), 
\end{equation}
where $D = \text{diag}(A\mathbbm{1}_m)$ and $\mathbbm{1}_m$ is an all one $m$-D vector.
\label{theo:norm_eigen}
\end{proposition}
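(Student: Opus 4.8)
The plan is to reduce the claim to the $\mathcal{O}(m)$ bound on the \emph{random-walk} normalized inverse that already underlies Proposition~\ref{theo:inverse_eigen}, paying only a harmless constant for using the symmetric (two-sided) normalization instead. Throughout I treat $A$ as nonsingular, so that $A^{\dagger}=A^{-1}$, consistent with the earlier observation that the bottleneck matrix is non-singular in practice; since $A$ is symmetric positive definite, the matrix $M:=D^{-1/2}A^{-1}D^{-1/2}$ is symmetric positive definite as well, whence $\|M\|_2=\lambda_{\max}(M)$ and it suffices to control this largest eigenvalue. I also record the elementary fact that $D_{ii}=\sum_{j}A_{ij}\ge A_{ii}=1$, so every diagonal entry of $D^{-1}$ is at most $1$ and therefore $\|D^{-1}\|_2\le 1$.

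First I would isolate the correct normalized object. Introducing the symmetrically normalized matrix $\bar{A}:=D^{-1/2}AD^{-1/2}$, a direct computation gives the identity
\begin{equation}
 M=D^{-1/2}A^{-1}D^{-1/2}=D^{-1}\big(D^{1/2}A^{-1}D^{1/2}\big)D^{-1}=D^{-1}\bar{A}^{-1}D^{-1},
\end{equation}
so that, by submultiplicativity together with $\|D^{-1}\|_2\le 1$,
\begin{equation}
 \|M\|_2\le \|D^{-1}\|_2^{2}\,\|\bar{A}^{-1}\|_2\le \|\bar{A}^{-1}\|_2.
\end{equation}
This is the decisive step: splitting $D$ evenly across both sides of $A^{-1}$ removes the extra factor $\|D\|_2=\mathcal{O}(m)$ that was responsible for the $\mathcal{O}(m^2)$ bound in Proposition~\ref{theo:inverse_eigen}.

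Next I would bound $\|\bar{A}^{-1}\|_2$ by the quantity already controlled in the random-walk setting. Since $\bar{A}$ is symmetric positive definite, $\|\bar{A}^{-1}\|_2=1/\lambda_{\min}(\bar{A})$. The random-walk matrix $A_n:=D^{-1}A=D^{-1/2}\bar{A}D^{1/2}$ is similar to $\bar{A}$, hence shares its real positive spectrum, so $\lambda_{\min}(\bar{A})=\lambda_{\min}(A_n)$ and $1/\lambda_{\min}(A_n)=\rho(A_n^{-1})$. Because the spectral radius is dominated by any induced matrix norm, $\rho(A_n^{-1})\le\|A_n^{-1}\|_1$. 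Finally I invoke the estimate established in the course of proving Proposition~\ref{theo:inverse_eigen}: under the $k$-connected assumption, the same normalization argument gives $\|A_n^{-1}\|_1\le m/k=\mathcal{O}(m)$. Chaining these inequalities yields $\|M\|_2\le\|\bar{A}^{-1}\|_2=1/\lambda_{\min}(A_n)\le\|A_n^{-1}\|_1=\mathcal{O}(m)$, which is the assertion.

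The main obstacle is the spectral bookkeeping in this middle step rather than any heavy computation: I must be careful that $A_n$, although not symmetric, has exactly the eigenvalues of the symmetric positive definite $\bar{A}$, so that its smallest eigenvalue is real, positive, and genuinely governs $\|\bar{A}^{-1}\|_2$, and that the passage from the $2$-norm of $\bar{A}^{-1}$ to the already-available $1$-norm bound on $A_n^{-1}$ is legitimate via $\rho(A_n^{-1})\le\|A_n^{-1}\|_1$. Everything else — the identity for $M$, the estimate $\|D^{-1}\|_2\le 1$, and the reuse of the $k$-connected $1$-norm bound — is routine.
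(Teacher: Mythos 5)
Your proof is correct, and it takes a genuinely different (and tighter) route than the paper's. The paper's own argument is a one-line chain
$\|D^{-1/2}A^{\dagger}D^{-1/2}\|_2 \leq \|D^{-1}\|_2\|A^{\dagger}\|_2 = \|D^{-1}A^{\dagger}\|_2 = \|A_n^{\dagger}\|^2 = \mathcal{O}(m)$,
which as written rests on two equalities that do not hold in general: submultiplicativity only gives $\|D^{-1}A^{\dagger}\|_2 \leq \|D^{-1}\|_2\|A^{\dagger}\|_2$, and $D^{-1}A^{\dagger}=(AD)^{-1}$ is not the same matrix as $A_n^{\dagger}=A^{-1}D$; moreover, taken literally with $\|D^{-1}\|_2\leq 1$ and $\|A^{\dagger}\|_2=\mathcal{O}(m^2)$ from Proposition~\ref{theo:inverse_eigen}, that first inequality only yields $\mathcal{O}(m^2)$. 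You avoid this by factoring through the symmetrically normalized matrix $\bar{A}=D^{-1/2}AD^{-1/2}$: the identity $D^{-1/2}A^{-1}D^{-1/2}=D^{-1}\bar{A}^{-1}D^{-1}$ together with $\|D^{-1}\|_2\leq 1$ reduces the claim to $\|\bar{A}^{-1}\|_2$, and the similarity $A_n=D^{-1/2}\bar{A}D^{1/2}$ plus $\rho(A_n^{-1})\leq\|A_n^{-1}\|_1\leq m/k$ legitimately imports the $\mathcal{O}(m)$ bound already established for the random-walk normalization. The two proofs share the same goal of reducing to the $\mathcal{O}(m)$ estimate on the normalized inverse, but yours supplies the spectral bookkeeping (similarity, positive-definiteness of $\bar{A}$, spectral radius versus induced norm) that the paper's sketch glosses over, so it can be read as a corrected and complete version of the intended argument; the only hypotheses you add — nonsingularity of $A$ and the $k$-connectedness assumption — are exactly those the paper itself invokes elsewhere.
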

\extension{
\begin{proof} 
\begin{equation*}
    \|D^{-1/2}A^\dag D^{-1/2}\|_2 \leq \|D^{-1/2}\|_2^2 \|A^\dag\|_2 = \|D^{-1}\|_2\|A^\dag\|_2, 
\end{equation*}
also, $D$ is a diagonal matrix, 
\begin{equation*}
    \|D^{-1}\|_2\|A^\dag\|_2 = \|D^{-1}A^\dag\|_2\ = \|A_n^\dag\|^2 = \mathcal{O}(m)
\end{equation*}
\end{proof}
This proposition suggests that symmetric normalization can lower the largest eigenvalue of $A^\dag$, reducing the spectral norm of $\hat{S}$.}

\noindent{\bf Attention normalization: }
\extension{
In light of the above theorem, we further introduce the normalization of SOFT as: 
} 
\begin{equation} \label{eq:reg_norm_attn}
    \hat{S} = \text{exp}\left(Q\ominus \widetilde{K}\right)D^{-\frac{1}{2}} \left(\text{exp}\left(\widetilde{Q}\ominus \widetilde{K}\right)\right)^{\dagger}D^{-\frac{1}{2}}
    \text{exp}\left(\widetilde{Q}\ominus K\right),
\end{equation}
\extension{
where $D=\text{diag}\left(\text{exp}\left(\widetilde{Q}\ominus \widetilde{K}\right) \mathbbm{1}_m\right)$.
This yields a tiny increase of $\mathcal{O}(m^2)$ in the computational complexity, which can be further reduced to $\mathcal{O}(\log{m})$ on GPU by parallel reduction algorithm \citep{cheng2014professional}.}
\rerevision{This discovery unleashes new possibilities for SOFT and paves the way for its expanded usage, particularly in applications that demand dense information reasoning (such as object detection and semantic segmentation). 
This results in our SOFT++ formulation.
}

\subsection{Instantiations}

\label{instantiations}
Figure \ref{fig:Structure} shows how our proposed {\em softmax-free self-attention} block ({\bf SOFT block}) can be implemented in a neural network.
We replace the self-attention block with our SOFT block in the traditional Transformer, that is, we stack a SOFT block with a feed forward residual block \citep{dosovitskiy2020image} to form a {\em softmax-free Transformer} layer ({\bf SOFT layer}).

Focusing on the general image recognition tasks,
we integrate our SOFT layer into the recent pyramidal Transformer architecture \citep{wang2021pyramid} to form our final model {\bf SOFT}.
Further, several improvements are introduced in patch embedding (\ie, tokenization).
Specifically, unlike \citep{wang2021pyramid} that uses a combination of non-overlapping convolution and layer normalization \citep{ba2016layer}, we adopt a stack of overlapping convolutions, batch normalization \citep{ioffe2015batch} and ReLU non-linearity.
Concretely, the $\tt STEM$ is implemented by 3 units of $\tt 3x3\; Conv{\to}BN{\to}ReLU$, with the stride of 2, 1, 2 respectively. 
Then, one such unit is applied to each of three following down-sampling operations with stride of 2 in the multi-stage architecture.

The architecture hyper-parameters of SOFT are:
    $d$: the input channel dimension of SOFT layer.
    $d_e$: the embedding dimension of tokens in SOFT block. In practice, we set $d_e=d$. 
    $h$: the head number of SOFT block.
    $d_h$: the channel dimension of each head and $d_h=d_e/h$.
    $n$: the input token sequence length of a SOFT block.
    $m$: the bottleneck token sequence length of SOFT block.
    $sp$: the sampling ratio of token sequence length sampling, which is the ratio between input token sequence length and the bottleneck token sequence length.
    $e$: the expansion ratio of the 2-layer feed forward block.
In SOFT, for all the stages
we set $d_h=32$, $e=4$
and $m=49$,
$sp$ varies in each stage according to the input token sequence length. 
Table \ref{table:arch-spec} details the family of our SOFT configurations with varying capacities (depth and width). 

\begin{table*}[t]

\centering
\caption{Architecture specifications of SOFT variants.
\textit{sp.}: sampling ratio. 
\textit{-d}: the hidden dimension. 
\textit{-h}: the number of heads in the self-attention block.
\textit{C33-BN-ReLU}: three 3x3 Conv-BN-ReLU, with the stride of 2, 1, 2 respectively.
\textit{C31-BN-ReLU}: one 3x3 Conv-BN-ReLU, with a stride of 2.
}
\renewcommand{\arraystretch}{1.5} 
\begin{tabular*}{\textwidth}{@{\extracolsep\fill}ccccc}
\hline
 & Tiny  & Small & Medium & Large \\
\hline

\hline
\multirow{3}{*}{Stage 1} & \multicolumn{4}{c}{C33-BN-ReLU, 64-d}  \\

& $\begin{bmatrix}\text{sp. 8x8,}\\\text{64-d, 2-h}\end{bmatrix}$ x 2   & $\begin{bmatrix}\text{sp. 8x8,}\\\text{96-d, 3-h}\end{bmatrix}$ x 2    & $\begin{bmatrix}\text{sp. 8x8,}\\\text{96-d, 3-h}\end{bmatrix}$ x 2   & $\begin{bmatrix}\text{sp. 8x8,}\\\text{128-d, 4-h}\end{bmatrix}$ x 2 \\

\multirow{3}{*}{Stage 2} & \multicolumn{4}{c}{C31-BN-ReLU, 128-d}  \\

& $\begin{bmatrix}\text{sp. 4x4,}\\\text{128-d, 4-h}\end{bmatrix}$ x 2  & $\begin{bmatrix}\text{sp. 4x4,}\\\text{192-d, 6-h}\end{bmatrix}$ x 2 & $\begin{bmatrix}\text{sp. 4x4,}\\\text{192-d, 6-h}\end{bmatrix}$ x 2 & $\begin{bmatrix}\text{sp. 4x4,}\\\text{256-d, 8-h}\end{bmatrix}$ x 2 \\

\multirow{3}{*}{Stage 3} & \multicolumn{4}{c}{C31-BN-ReLU, 256-d}  \\

& $\begin{bmatrix}\text{sp. 2x2,}\\\text{320-d, 10-h}\end{bmatrix}$ x 5 & $\begin{bmatrix}\text{sp. 2x2,}\\\text{384-d, 12-h}\end{bmatrix}$ x 5 & $\begin{bmatrix}\text{sp. 2x2,}\\\text{384-d, 12-h}\end{bmatrix}$ x 18  & $\begin{bmatrix}\text{sp. 2x2,}\\\text{512-d, 16-h}\end{bmatrix}$ x 18\\

\multirow{3}{*}{\begin{tabular}[c]{@{}c@{}}Stage 4\\ w. cls token\end{tabular}} & \multicolumn{4}{c}{C31-BN-ReLU, 512-d}  \\

& $\begin{bmatrix}\text{sp. 1x1,}\\\text{512-d, 16-h}\end{bmatrix}$ x 2 & $\begin{bmatrix}\text{sp. 1x1,}\\\text{768-d, 24-h}\end{bmatrix}$ x 2  & $\begin{bmatrix}\text{sp. 1x1,}\\\text{768-d, 24-h}\end{bmatrix}$ x 2 & $\begin{bmatrix}\text{sp. 1x1,}\\\text{1024-d, 32-h}\end{bmatrix}$ x 2 \\
\hline
\end{tabular*}
\normalsize

\label{table:arch-spec}
\end{table*}

\begin{table*}[htb]
\label{tab:linearizationmethods}

\renewcommand{\arraystretch}{1.5} 
\caption{Comparison of different linear/efficient transformer variants on ImageNet~\citep{deng2009imagenet}, based on our multi-stage Tiny configuration (see Table \ref{table:arch-spec}). 
The memory usage is measured with the batch size of 1024 which is our standard training setting.
Transformer is tested at a batch size of 256, which is the maximal number possible with the GPU resource at our disposal. 
Throughput is in format as $\text{Train throughput}\ /\ \text{inference throughput}$.
}
\begin{tabular*}{\textwidth}{@{\extracolsep\fill}lccccc}
\hline

Methods & Memory & Params & FLOPs & Throughput (img/s) & Top-1 \%\\

\hline

\hline

Transformer~\citep{vaswani2017attention} & 19.0GB$\dagger$ & 13M & 3.9G & 1073 / 3240 & 80.0\\
Linformer~\citep{wang2020linformer} & 11.7GB & 13M & 1.9G & 2767 / 3779 & 78.2 \\
Performer~\citep{choromanski2020rethinking} & 15.0GB & 13M & 2.2G & 2037 / 3657 & 76.1\\
Nystr{\"o}mformer~\citep{xiong2021nystr} & 17.2GB & 13M & 2.0G & 1891 / 3518 & 80.1\\
\textbf{SOFT} & 15.8GB & 13M & 1.9G & 1730 / 3436& \textbf{80.9}\\
\textbf{SOFT++} & 15.8GB & 13M & 1.9G & 1730 / 3436& \textbf{80.9}\\

\hline

\end{tabular*}
\end{table*}

\begin{table*}[!ht]
\caption{Evaluation results on ILSVRC-2012 ImageNet-1K ~\citep{deng2009imagenet} $\tt validation$ set. 
We report the results
using the input size of 224x224 pixels center cropped from resized images with 256x256 pixels. 
$\tt M.S. Out.$ stands for whether the model is designed for multi-scale output.
$\dagger$: Corrected FLOPs by
taking into account the cost of attention matrix multiplication
overlooked in the origin paper.
}
\renewcommand{\arraystretch}{1.5} 
\begin{tabular*}{\textwidth}{@{\extracolsep\fill}lcccccc}

\hline
Model  & Style & Resolution  &M.S. Out.? & Params & FLOPs & Top-1 \%.\\
\hline

\hline
ResNet-18~\citep{he2016deep} & Convolution & $224^{2}$  & \checkmark & 11M & 1.9G & 69.8 \\
PVT-Tiny~\citep{wang2021pyramid} & Transformers  & $224^{2}$ & \checkmark   & 13M & 1.9G\dag & 75.1\\
ConViT-Ti+ ~\citep{d2021convit} & Transformers& $224^2$ & \XSolidBrush & 10M & 2.0G & 76.7  \\
Coat-Lite Mini~\citep{xu2021co} & Transformers &  $224^{2}$ & \checkmark  & 11M &2.0G &78.9\\
LambdaNets-50~\citep{bello2021lambdanetworks} & Transformers &  $224^{2}$ & \checkmark &  16M &- &78.9\\
 ViP-Ti ~\citep{sun2021visual} & Transformers& $224^2$  & \checkmark& 13M & 1.7G & 79.0  \\
\rerevision{\textbf{SOFT-Tiny}} & \rerevision{SOFT} & \rerevision{$224^{2}$}  & \rerevision{\checkmark}  & \rerevision{13M} & \rerevision{1.9G} & \rerevision{80.9}\\
\rerevision{\textbf{SOFT++-Tiny}} & \rerevision{SOFT++} & \rerevision{$224^{2}$}  & \rerevision{\checkmark}  & \rerevision{13M} & \rerevision{1.9G} & \rerevision{\textbf{80.9}}\\

ResNet-50~\citep{he2016deep} & Convolution & $224^{2}$  & \checkmark & 25M & 4.1G & 78.5 \\
PVT-Small~\citep{wang2021pyramid}  & Transformer &  $224^{2}$ & \checkmark & 24M & 4.0G\dag & 79.8 \\
Swin-T~\citep{liu2021swin} & Transformer &  $224^{2}$ &  \checkmark & 29M & 4.5G & 81.3 \\
Twins-SVT-S~\citep{chu2021twins} & Hybrid &  $224^{2}$ &  \checkmark & 24M & 3.7G & 81.7 \\
CoAtNet-0~\citep{dai2021coatnet}& Hybrid& $224^2$ & \checkmark & 25M & 4.2G & 81.6\\
 DeiT III-S~\citep{touvron2022deit}& Transformer& $224^2$ & \XSolidBrush & 22M & 4.6G & 81.4\\
SwinV2-T~\citep{liu2022swin}& Transformer & $256^2$ & \checkmark & 29M & 4.5G & 81.7  \\
ConvNext-T~\citep{liu2022convnet}& Hybrid & $224^2$ & \checkmark & 29M & 4.5G & 82.1  \\

\rerevision{\textbf{SOFT-Small}} & \rerevision{SOFT} & \rerevision{$224^{2}$}  & \rerevision{\checkmark}  & \rerevision{27M} & \rerevision{4.5G} & \rerevision{82.5}\\
\rerevision{\textbf{SOFT++-Small}} & \rerevision{SOFT++} & \rerevision{$224^{2}$}  & \rerevision{\checkmark}  & \rerevision{27M} & \rerevision{4.5G} & \rerevision{\textbf{82.6}}\\

ResNet-101~\citep{he2016deep} & Convolution & $224^{2}$ & \checkmark & 44M & 7.9G & 79.8\\
PVT-Medium~\citep{wang2021pyramid} & Transformer & $224^{2}$  & \checkmark & 44M & 7.0G\dag & 81.2 \\
ViT-Small/16~\citep{dosovitskiy2020image} & Transformer & $224^{2}$ & \XSolidBrush & 48M & 9.9G & 80.8\\
Swin-S~\citep{liu2021swin} & Transformer &  $224^{2}$ &  \checkmark & 50M & 8.7G & 83.0 \\
ConvNext-S ~\citep{liu2022convnet}& Hybrid & $224^2$  &  \checkmark& 50M & 8.7G & 83.1  \\
 CoAtNet-1~\citep{dai2021coatnet}& Transformer& $224^2$ &  \checkmark & 42M & 8.4G & 83.3\\
SwinV2-S ~\citep{liu2022swin}& Transformer & $256^2$ &  \checkmark & 50M & 8.7G & 83.6  \\
\rerevision{\textbf{SOFT-Medium}} & \rerevision{SOFT} & \rerevision{$224^{2}$}  & \rerevision{\checkmark}  & \rerevision{48M} & \rerevision{8.7G} & \rerevision{83.2}\\
\rerevision{\textbf{SOFT++-Medium}} & \rerevision{SOFT++} & \rerevision{$224^{2}$}  & \rerevision{\checkmark}  & \rerevision{48M} & \rerevision{8.7G} & \rerevision{\textbf{83.7}}\\

CaiT-S36\citep{touvron2021going} & Transformer & $224^{2}$ & \checkmark & 88M & 13.9G & 83.3\\
Swin-B\citep{liu2021swin} & Transformer & $224^{2}$ & \checkmark & 88M & 15.4G & 83.3\\
Twins-SVT-L~\citep{chu2021twins} & Hybrid &  $224^{2}$ &  \checkmark & 99M & 14.8G & 83.3 \\
DeiT III-B~\citep{touvron2022deit} & Transformer & $224^2$ &  \XSolidBrush & 87M & 15.5G & 83.8\\
 ConvNext-S ~\citep{liu2022convnet} & Hybrid & $224^2$ &  \checkmark & 89M & 15.4G & 83.8  \\
 CoAtNet-2~\citep{dai2021coatnet} & Hybrid & $224^2$ &  \checkmark & 75M & 15.7G & 84.1\\
SwinV2-B ~\citep{liu2022swin} & Transformer & $256^2$ &  \checkmark & 88M & 15.4G & 84.1 \\
\rerevision{\textbf{SOFT-Large}} & \rerevision{SOFT} & \rerevision{$224^{2}$}  & \rerevision{\checkmark}  & \rerevision{85M} & \rerevision{15.4G} & \rerevision{83.6}\\
\rerevision{\textbf{SOFT++-Large}} & \rerevision{SOFT++} & \rerevision{$224^{2}$}  & \rerevision{\checkmark}  & \rerevision{85M} & \rerevision{15.4G} & \rerevision{\textbf{84.1}}\\

\hline
\end{tabular*}

\label{tab:classification}
\end{table*}
\begin{figure}[!htb]\centering
\includegraphics[width=\linewidth]{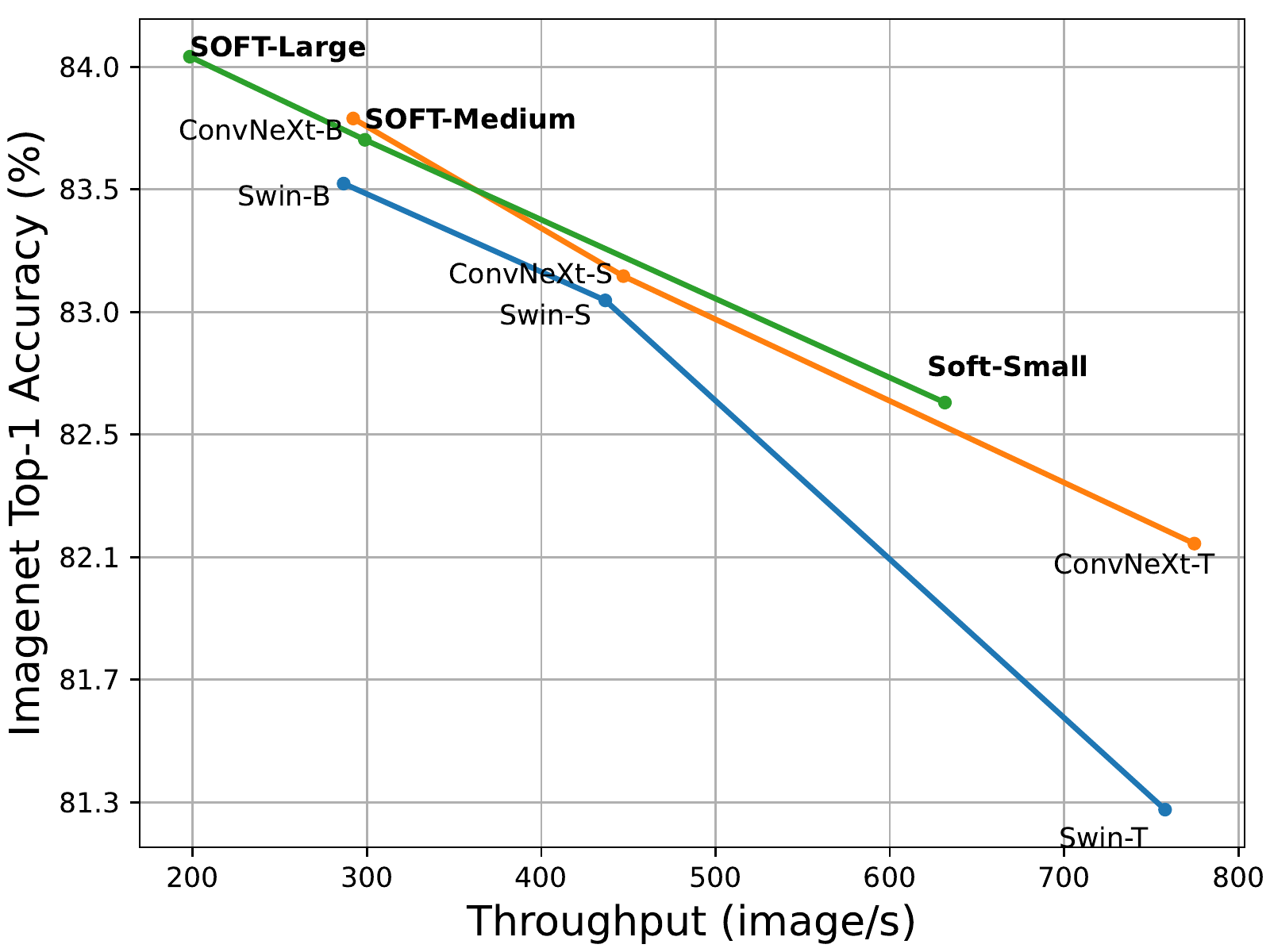}
\caption{\major{A comparison of Top-1 classification accuracy on the ImageNet validation set~\citep{deng2009imagenet} with respect to inference throughput for various models. Our comparison includes CNN models such as ConvNext~\citep{liu2022convnet}, as well as Transformer models like Swin~\citep{liu2021swin}. In this comparison, models positioned closer to the top-right indicate superior performance, balancing both accuracy and throughput effectively. 
Inference throughput is measured on a V100 GPU, following \citep{liu2021swin, liu2022convnet}.}
}
\vspace{-1em}
\label{fig:fps_top1}
\end{figure}
\begin{table*}[htb]
\caption{\major{Object detection (RetinaNet~\citep{lin2017focal})  results  on COCO~\citep{lin2014microsoft} \texttt{val2007}.
We report the results by training 12 epoch ($1\times$) schedule. 
\texttt{\#P} stands for parameter size. 
\texttt{AP}$^b$ represents bounding box AP.
Note, SOFT cannot converge in training.
}
}
\renewcommand{\arraystretch}{1.5} 
\begin{tabular*}{\textwidth}{@{\extracolsep\fill}lccccccc}
\hline

\multirow{2}{*}{Backbone}  & \multicolumn{7}{c}{RetinaNet 1$\times$} \\
\cmidrule{2-8}
& \#P  & AP & AP$_{50}$ & AP$_{75}$ & AP$_S$ & AP$_M$ & AP$_L$ \\ 

\hline

\hline

ResNet18~\citep{he2016deep} &21M & 31.8 & 49.6 & 33.6 & 16.3 & 34.3 & 43.2  \\
PVT-Tiny~\citep{wang2021pyramid} &23M& {36.7}& {56.9}& {38.9}& {22.6}& {38.8} &{50.0}   \\
PVTv2-B1 ~\citep{wang2022pvt} & 23M & 41.2 & 61.9 & 43.9 & 25.4 & 44.5 & 54.3 \\
\rerevision{\textbf{SOFT++-Tiny}} & \rerevision{23M} & \rerevision{\textbf{41.9}} & \rerevision{\textbf{62.7}} & \rerevision{\textbf{44.7}} & \rerevision{\textbf{27.8}} & \rerevision{\textbf{45.4}} & \rerevision{\textbf{55.6}} \\

ResNet50~\citep{he2016deep} &38M & 36.3 & 55.3 & 38.6 & 19.3 & 40.0 & 48.8 \\
PVT-Small~\citep{wang2021pyramid} & {34M} & {40.4} & {61.3} & {43.0} & {25.0} & {42.9} & {55.7} \\
ViL-S ~\citep{zhang2021multi} & 35M & 41.6 & 62.5 & 44.1 & 24.9 & 44.6 & 56.2 \\
Swin-T\citep{liu2021swin} & 38M & 41.7 & 61.2 & 43.2 & 26.0 & 44.3 & 54.5 \\
\rerevision{\textbf{SOFT++-Small}} & \rerevision{38M} & \rerevision{\textbf{43.7}} & \rerevision{\textbf{64.9}} & \rerevision{\textbf{46.8}} & \rerevision{\textbf{28.7}} & \rerevision{\textbf{47.4}} & \rerevision{\textbf{57.6}} \\

ResNet101~\citep{he2016deep} &57M  & 38.5 & 57.8 & 41.2 & 21.4 & 42.6 & 51.1 \\
PVT-Medium~\citep{wang2021pyramid} &54M & {41.9} & {63.1} & {44.3} & {25.0} & {44.9} & {57.6} \\
ViL-M ~\citep{zhang2021multi} & 50M & 42.9 & 64.0 & 45.4 & 27.0 & 46.1 & 57.2 \\
Swin-S\citep{liu2021swin} & 60M & 43.0 & 63.8 & 45.7 & 27.1 & 46.9 & 57.2 \\
\rerevision{\textbf{SOFT++-Medium}} & \rerevision{59M} & \rerevision{\textbf{44.3}} & \rerevision{\textbf{64.7}} & \rerevision{\textbf{47.4}} & \rerevision{\textbf{29.0}} & \rerevision{\textbf{48.2}} & \rerevision{\textbf{59.9}} \\

ResNeXt101 ~\citep{xie2017aggregated} & 95M & 41.0 & 60.9 & 44.0 & 23.9 & 45.2 & 54.0 \\
PVT-Large~\citep{wang2021pyramid} & 71M & {42.6} & {63.7} & {45.4} & {25.8} & {46.0} & {58.4}  \\
Swin-B ~\citep{liu2021swin} & 98M & 44.7 & 65.9 & 49.2 & - & - & - \\
\rerevision{\textbf{SOFT++-Large}} & \rerevision{98M} & \rerevision{\textbf{47.0}} & \rerevision{\textbf{67.8}} & \rerevision{\textbf{50.4}} & \rerevision{\textbf{30.2}} & \rerevision{\textbf{50.9}} & \rerevision{\textbf{62.0}} \\

\hline

\end{tabular*}

\label{tab:retina}
\end{table*}
\begin{table*}[htb]
\caption{\major{Instance segmentation (Mask R-CNN~\citep{he2017mask}) results  on COCO~\citep{lin2014microsoft} \texttt{val2007}.
We report the results by training 12 epoch ($1\times$) schedule. 
\texttt{\#P} stands for parameter size. 
\texttt{AP}$^m$ represents mask AP respectively.
Note, SOFT cannot converge in training.}
}

\renewcommand{\arraystretch}{1.5} 
\begin{tabular*}{\textwidth}{@{\extracolsep\fill}lccccccc}

\hline
\multirow{2}{*}{Backbone}  & \multicolumn{7}{c}{Mask R-CNN 1$\times$} \\
\cmidrule{2-8}
& \#P  & AP$^b$ & AP$^b_{50}$ & AP$^b_{75}$ & AP$^m$ & AP$^m_{50}$ & AP$^m_{75}$ \\ 

\hline

\hline

ResNet18~\citep{he2016deep} &31M & 34.0 & 54.0 & 36.7 & 31.2 & 51.0 & 32.7\\
PVT-Tiny~\citep{wang2021pyramid} &33M & {36.7} & {59.2} & {39.3} & {35.1} & {56.7} & {37.3} \\
PVTv2-B1 ~\citep{wang2022pvt} & 33M & 41.8 & 64.3 & 45.9 & 38.8 & 61.2 & 41.6 \\
\rerevision{\textbf{SOFT++-Tiny}}  &\rerevision{32M} & \rerevision{\textbf{41.2}} & \rerevision{\textbf{63.7}} & \rerevision{\textbf{44.7}} & \rerevision{\textbf{38.2}} & \rerevision{\textbf{61.0}} & \rerevision{\textbf{41.0}}\\

ResNet50~\citep{he2016deep} & 44M & 38.0 & 58.6 & 41.4 & 34.4 & 55.1 & 36.7\\
PVT-Small~\citep{wang2021pyramid} &{44M} &{40.4} & {62.9} & {43.8} & {37.8} & {60.1} & {40.3}\\
ViL-S ~\citep{zhang2021multi} & 45M & 41.8 & 64.1 & 45.1 & 38.5 & 61.1 & 41.4 \\
Swin-T\citep{liu2021swin} & 48M& 42.7 & 65.2 & 46.8 & 39.3 & 62.2 & 42.2 \\
\rerevision{\textbf{SOFT++-Small}} & \rerevision{48M} & \rerevision{\textbf{43.8}} & \rerevision{\textbf{66.0}} & \rerevision{\textbf{47.5}} & \rerevision{\textbf{40.1}} & \rerevision{\textbf{63.0}} & \rerevision{\textbf{43.0}}\\

ResNet101~\citep{he2016deep} &63M & 40.4 & 61.1 & 44.2 & 36.4 & 57.7 & 38.8 \\
PVT-Medium~\citep{wang2021pyramid} &64M & {42.0} &{64.4} &45.6 &{39.0}& {61.6}& {42.1}\\
ViL-M ~\citep{zhang2021multi} & 60M & 43.4 & 65.9 & 47.0 & 39.7 & 62.8 & 42.1 \\
Swin-S\citep{liu2021swin} & 69M& 45.6 & 67.4 & 50.0 & 41.2 & 64.5& 44.3\\
\rerevision{\textbf{SOFT++-Medium}}  & \rerevision{69M} & \rerevision{\textbf{46.6}} & \rerevision{\textbf{67.8}} & \rerevision{\textbf{51.2}} & \rerevision{\textbf{42.0}} & \rerevision{\textbf{64.8}} & \rerevision{\textbf{45.2}}\\

ResNeXt101 ~\citep{xie2017aggregated} & 101M & 42.8 & 63.8 & 47.3 & 38.4 & 60.6 & 41.3 \\
PVT-Large~\citep{wang2021pyramid} & 81M& {42.9}& {65.0} & 46.6 &{39.5}& {61.9}& {42.5}  \\
Swin-B ~\citep{liu2021swin} & 107M & 45.5 & - & - & 41.3 & - & -\\
\rerevision{\textbf{SOFT++-Large}} & \rerevision{106M} & \rerevision{\textbf{47.0}} & \rerevision{\textbf{68.3}} & \rerevision{\textbf{51.7}} & \rerevision{\textbf{42.2}} & \rerevision{\textbf{65.2}} & \rerevision{\textbf{45.4}}\\

\hline
\end{tabular*}

\label{tab:maskrcnn}
\end{table*}

\section{Experiments}
\label{experiment}
\subsection{Image classification}
\label{setup}

\noindent{\bf Dataset:}
We evaluate the proposed SOFT and \rerevision{SOFT++} on the ILSVRC-2012 ImageNet-1K dataset \citep{deng2009imagenet} with 1.28M training images and 50K validation images from 1,000 classes.
Following the common practice, we train a model on the training set and evaluate on the validation set.

\noindent{\bf Metrics:}
For model performance, the top-1 accuracy on a single crop is reported.
To assess the cost-effectiveness, we also report the model size
and floating point operations (\ie, FLOPs).

\noindent{\bf Implementation details:}
We use the code base \citep{rw2019timm} with the default setting to train and test all the models.
Specifically, we use weight decay of 0.05 and 10 epochs of linear warm-up.
We conduct 300 epochs training with an $\tt AdamW$ optimizer 
and decreasing learning rate with the cosine annealing schedule.
During training, random flipping, mixup \citep{zhang2017mixup} and cutmix \citep{yun2019cutmix} are adopted for data augmentation. 
Label smoothing \citep{szegedy2016rethinking} is used for loss calculation. 
All our variants are trained with a batch size of 1024 on 32G NVIDIA V100 GPUs.
We also implement our method using the Mindspore \citep{mindspore}.

\noindent{\bf Comparison with existing linear Transformers: \label{sec:competitiors}}We compare our method with three existing linear Transformer models:
Linformer \citep{wang2020linformer}, 
Performer \citep{choromanski2020rethinking},
Nystr{\"o}mformer \citep{xiong2021nystr} in terms of model complexity and accuracy.

Two experimental settings are adopted. Under the first setting,  for all methods we use the same $\tt Tiny$ (Table \ref{table:arch-spec}) architecture
for a fair comparison. 
That is, we replace the core self-attention block in SOFT\rerevision{/SOFT++} with each baseline's own attention block with the rest of the architecture unchanged.
Note that the {\em spatial reduction} module of \citep{wang2021pyramid}
is a special case of Linformer \citep{wang2020linformer}.
We set the reduction ratio to be identical to ours.
With the same uniform sampling idea,
we replace the 1D window averaging of Nystr{\"o}mformer \citep{xiong2021nystr} (for NLP tasks) with 2D average pooling (for images). 
The downsampling ratio remains identical to ours.
It is also worth mentioning that there is no official code released for Reformer \citep{kitaev2020reformer} and the local Sensitive Hash (LSH) module has strict requirements on the length of input tokens.  We thus do not include this method in our comparison.

From Table \ref{tab:linearizationmethods} we can make the following observations:
(i) Linear Transformer methods substantially reduce the memory and FLOPs while maintain similar parameter size comparing to the Transformer on the $\tt Tiny$ architecture;
(ii) Our approach SOFT\rerevision{/SOFT++} achieve the best classification accuracy among all the linearization methods.
(iii) Our inference speed is on-par with other compared linear Transformers and our training speed is slightly slower than Nystromformer and both are slower than Performer and Linformer.
Note that the slow training speed of our model is mostly due to the Newton-Raphson iteration which can only be applied sequentially for ensuring the accuracy of Moore-Penrose inverse.
In summary, due to the on-par inference speed we consider the training cost increase is a price worth paying for our superior accuracy.

Under the second setting,  we focus on  the memory efficiency of SOFT against the baselines.
Here we follow the ViT \citep{dosovitskiy2020image} network structure, stacking 12 attention layers with hidden dimension $d=384$, heads $h=12$, bottleneck token sequence length $m=49$.
Different attention blocks from the three
linearized Transformer variants, Linformer \citep{wang2020linformer}, Performer \citep{choromanski2020rethinking},
and
Nystr{\"o}mformer \citep{xiong2021nystr} are studied.
For each Transformer variant, we adjust its token sequence length $n$ in a linear increment. 
Specifically, we use a token sequence length of $784\times p$ where $p=1,2,3,4,5,6,7,8$ and set batch size 1 to verify whether the memory consumption increases “quadratically” or “linearly”.
Figure \ref{fig:formercomparison} shows all compared transformer variants including our SOFT indeed have a linear memory usage complexity. 
This is in contrast with the standard Transformer which cannot cope with long token sequences with a quadratic complexity.

\noindent{\bf Comparison with state-of-the-art CNNs and ViTs: }
We compare with state-of-the-art alternatives and report the top-1 accuracy on the ImageNet-1K validation set.
FLOPs are calculated at batch size 1. 
From Figure \ref{fig:paramettop1} and Table \ref{tab:classification}, the following observations are made:
(i) Overall, ViT and its variants yield better classification accuracy over CNNs.
(ii) We achieve the best performance among the recent pure vision Transformer based methods including  ViT \citep{dosovitskiy2020image} and DeiT \citep{touvron2021training}, as well as the state-of-the-art CNN RegNet \citep{radosavovic2020designing}.
(iii) Our SOFT and \rerevision{SOFT++} outperform the most similar (in architecture configuration) Transformer counterparts PVT \citep{wang2021pyramid} at all variants. Since the attention module is the main difference, this validates directly the effectiveness of our model.
(iv) We can also beat the latest ViT variants Twins \citep{chu2021twins} which is designed to address the efficiency limitation of ViT. We have done so with less parameters and fewer float point computation.

\begin{figure*}[!htb]\centering
\includegraphics[width=1.0\linewidth]{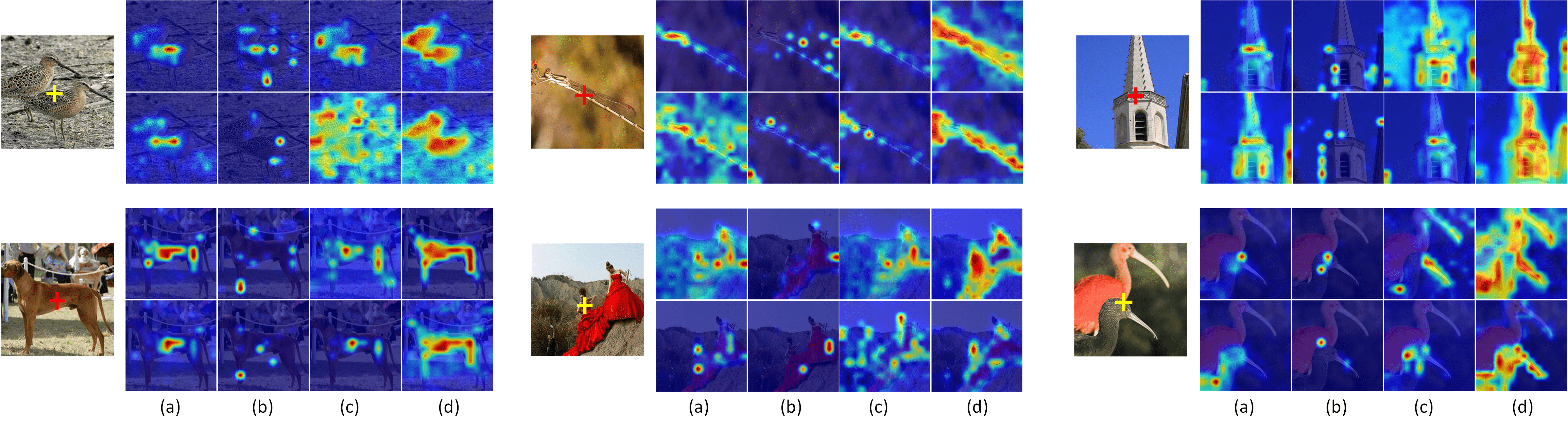}
\caption{\major{Comparison of attention heatmaps for a selected query patch (indicated by a cross "+") against all patches in an image. 
Heatmaps are derived from the first head's corresponding row in the attention maps, as calculated by Equation~\ref{eq:reg_norm_attn}. These heatmaps are normalized to a 0-1 scale, with warmer colors indicating higher relevance. The model variants compared are: \textbf{(a)} Transformer~\citep{vaswani2017attention}, \textbf{(b)} Performer~\citep{choromanski2020rethinking}, \textbf{(c)} Nystromformer~\citep{xiong2021nystr}, and \textbf{(d)} Our SOFT approach. For additional examples, refer to Appendix~\ref{sec:attn_vis}.}}

\vspace{-1em}
\label{fig:attention_heat}
\end{figure*}

To gain insights into how attention is learned using our SOFT and the alternatives, Figure \ref{fig:attention_heat} shows the attention masks. 
For each model, we visualize the first two attention heads.
It is evident that 
SOFT exhibits robustness and versatility in capturing local and long distance relations among pixels. 
We note that, although SOFT is trained for object categorization (ImageNet \citep{deng2009imagenet}), it seems to be able to learn both semantic concepts shared across instances in the same category and instance specific features. For instance, in the bottom-right example of a bird class, one attention head focuses on the  black bird only, while the other attends to both birds in the image. 
More examples are given in 
Appendix \ref{sec:attn_vis}.

\noindent{\bf Throughputs comparison: }
\major{
Figure \ref{fig:fps_top1} illustrates that our model achieves a superior balance between speed and performance compared to the Swin Transformer \citep{liu2021swin}, and maintains a comparable balance with the current CNN state-of-the-art, ConvNext \citep{liu2022convnet}.
As detailed in Table \ref{tab:linearizationmethods}, we employ a series-computed Newton iteration method to ensure numerical accuracy. While this approach slightly reduces speed, it preserves the accuracy of our model.
}

\noindent{\bf Comparison between SOFT and SOFT++}
\rerevision{As shown in Table \ref{tab:classification}, SOFT++ performs equally well as SOFT for small models, and outperforms the preliminary version for larger models. 
Further, Table \ref{tab:seqlen_cls} 
validates the superiority of SOFT++ over SOFT in processing
long token sequences in model deployment.
}
\begin{table*}[ht]
\renewcommand{\arraystretch}{1.5} 
\caption{
\rerevision{The evaluation results on the validation set of ImageNet-1K~\citep{deng2009imagenet} at various input sizes. The models were trained on the size of $224\times224$.
The down-sampling ratio of the patch embedding remains at 4. 
}}
\centering
\begin{tabular*}{\textwidth}{@{\extracolsep\fill}lccc}

    \hline
    Test input size & $224\times224$ & $384\times384$ & $512\times512$ \\
    \hline

    \hline
    Token sequence length & 3136& 9216 & 16384 \\

    \rerevision{SOFT} & \rerevision{82.5} & \rerevision{33.7} & \rerevision{0.14}\\
    \rerevision{SOFT++} & \rerevision{82.6} & \rerevision{82.6} & \rerevision{80.2}\\

    \hline
\end{tabular*}
\label{tab:seqlen_cls}
\end{table*}

\subsection{Object detection on COCO}
\noindent{\bf Dataset: }
\extension{
We evaluate the object detection performance of our SOFT++ on the COCO benchmark \citep{lin2014microsoft} including the \texttt{train2017} (118k images) and \texttt{val2017} (5k images) sets.}

\noindent{\bf Implementation details: }
\extension{
We consider two representative detectors: RetinaNet \citep{lin2017focal} and 
Mask R-CNN \citep{he2017mask}.
We use \texttt{AdamW} optimizer with base learning rate of $1\times 10^{-4}$ and weight decay of $0.01$. 
We train all the model with batch size 16 on 8 V100 GPUs. 
Following the practices of MMDetection \citep{chen2019mmdetection}, we adopt the 1$\times$ and 3$\times$ training schedules. 
In the training stage without multi-scale, images are resized to make the shorter sides at 800 pixels and the longer sides no exceeding 1333 pixels.
In the training stage with multi-scale, the shorter sides of images are randomly resized to between 480 to 800.}

\noindent{\bf Comparison with state-of-the-art CNNs and ViTs: } \extension{
We compare with the state-of-the-art alternatives on RetinaNet \citep{lin2017focal} in Table \ref{tab:retina} and Mask R-CNN \citep{he2017mask} in Table \ref{tab:maskrcnn}.
Our SOFT++ outperforms the CNN ResNet \citep{he2016deep} and Transformer counterparts PVT \citep{wang2021pyramid} across all complexity groups in both object detection and instance segmentation, validating the superior trade-off between model complexity and performance by our design.}

\subsection{Semantic segmentation}
\noindent{\bf Dataset: }
\rerevision{We evaluate the semantic segmentation performance of our SOFT++ on ADE20K \citep{zhou2019semantic} and Cityscapes \citep{cordts2016cityscapes}.
}

\noindent{\bf Implementation details: }
\rerevision{
UperNet \citep{xiao2018unified} is used as the framework. 
\texttt{AdamW} optimizer with $6\times 10^{-4}$ learning rate is applied to train ADE20K for 160k iterations and Cityscapes for 40k iterations.
Images are cropped as $512\times 512$ for ADE20K and $ 768 \times 768 $ for Cityscapes during training.
Multi-scale training and test time augmentation are not used.
}

\noindent{\bf  Comparison with state-of-the-art CNNs and ViTs: }
\rerevision{}{
Table \ref{tab:soft_segmentation} shows that SOFT++ surpasses ResNet clearly, and the newest designed vision transformer Swin \citep{liu2021swin} and CovNext \citep{liu2022convnet} slightly under the similar parameters and float point computation cost.}
\begin{table*}[t]
\caption{\rerevision{Semantic segmentation performance using our HLG Transformer with UperNet~\citep{xiao2018unified} and SETR-PUP on Cityscapes validation set. 
Single-scale inference and 40k training schedules are used.
Note, SOFT cannot converge in training.
}
}
\label{tab:soft_segmentation}
\renewcommand{\arraystretch}{1.5} 
\begin{tabular*}{\textwidth}{@{\extracolsep\fill}llccccc}

\hline
\multirow{2}{*}{Method} & \multirow{2}{*}{Backbone} & \multirow{2}{*}{\#P} & \multicolumn{2}{c}{Cityscapes} & \multicolumn{2}{c}{ADE20K}\\
  &  &  & FLOPs & mIoU & FLOPs & mIoU\\
\hline

\hline
 FCN~\citep{long2015fully} & ResNet-101 & 68M  & 619G  & 76.6  & 276G & 39.9\\
 PSPNet~\citep{zhao2017pyramid} & ResNet-101 & 68M  & 576G  & 78.5 & 256G & 44.4 \\
 DLabV3+~\citep{chen2018encoder} & ResNet-101 & 62M & 571G & 79.3 & 262G & 46.9 \\
 CCNet~\citep{huang2019ccnet} & ResNet-101 & 68M & 625G & 80.2 & 278G & 43.7\\
 SETR~\citep{zheng2021rethinking} & ViT-Large & 318M & 1340G & 79.2 & 602G & 48.5\\
 OCRNet~\citep{yuan2020object} & HRNet-W48 & 70M & 972G & 81.1 & 164G & 43.2 \\
 UperNet~\citep{xiao2018unified} & Swin-T & 60M & - & - & 236G & 44.5\\
 UperNet~\citep{xiao2018unified} & Swin-S & 81M & - & - & 259G & 47.6\\
 UperNet~\citep{xiao2018unified} & Swin-B & 121M & - & - & 297G & 48.1\\
 UperNet~\citep{xiao2018unified} & ConvNext & 60M & - & - & 235G & 46.0\\
 UperNet~\citep{xiao2018unified} & ConvNext & 82M & - & - & 256G & 48.7\\
 UperNet~\citep{xiao2018unified} & ConvNext & 122M & - & - & 293G & 49.1\\

 \rerevision{UperNet}~\citep{xiao2018unified} & \textbf{\rerevision{SOFT++-Small}} & \rerevision{60M} & \rerevision{545G} & \rerevision{81.2} & \rerevision{237G} & \rerevision{46.5} \\
 \rerevision{UperNet}~\citep{xiao2018unified} & \textbf{\rerevision{SOFT++-Med}}  & \rerevision{81M} & \rerevision{607G} & \rerevision{82.0} & \rerevision{260G} & \rerevision{48.9} \\
 \rerevision{UperNet}~\citep{xiao2018unified}  &\textbf{\rerevision{SOFT++-Large}}  & \rerevision{121M} & \rerevision{899G} & \rerevision{\textbf{82.6}} & \rerevision{301G} & \rerevision{\textbf{49.2}} \\

\hline

\end{tabular*}
\end{table*}

\subsection{Ablation studies}
\label{sec: abl}
{\bf Pyramidal architecture: } 
Unlike the earlier non-pyramidal vision Transformers (\eg, ViT \citep{dosovitskiy2020image}), most recent pyramidal (multi-scale) Transformers (\eg, PVT \citep{wang2021pyramid}) use convolution layers to reduce the spatial resolution (\ie, token sequence length) between stages. 
In this study, we ablate SOFT++ with a pyramidal architecture (our default SOFT++-$\tt Small$), SOFT++ w/o a pyramidal architecture and DeiT-S \citep{touvron2021training} (no pyramidal architecture either). 
We replace the Transformer layer with a SOFT++ layer to get SOFT++ w/o a pyramidal architecture. 
Note all three variants have similar parameters and FLOPs.
Table \ref{tab:pyramidal-overlapping}a shows that the conv-based pyramidal architecture is clearly superior to a non-pyramidal design, and our non-pyramidal counterpart is even slightly better than DeiT-S \citep{touvron2021training} whilst enjoying linear complexity.
\begin{table*}[ht]
\caption{\major{Ablations on assessing the Impact of Symmetric Kernels on ImageNet-1K~\citep{deng2009imagenet} image classification. As baselines for our study, we employ two of the most common backbones: ViT-small/16~\citep{dosovitskiy2020image} and Swin-T~\citep{liu2021swin}.
}}

\centering
\renewcommand{\arraystretch}{1.5} 
\begin{tabular*}{\textwidth}{@{\extracolsep\fill}lcc}

    \hline
    Model & Asymmetric ($Q \neq K$) & Symmetric ($Q = K$)\\
    \hline
    
    \hline
    ViT-Small/16~\citep{dosovitskiy2020image}& 80.8 & 80.5 \\
    Swin-T~\citep{liu2021swin} & 81.3 & 80.9\\

    \hline
\end{tabular*}
\label{tab:abl_sym}
\end{table*}
\begin{table*}[ht]
\caption{Ablations on bottleneck token sequence length.}
\centering
\renewcommand{\arraystretch}{1.5} 
\begin{tabular*}{\textwidth}{@{\extracolsep\fill}cccc}

    \hline
    Bottleneck & Memory & FLOPs & Top-1 \%\\
    \hline
    
    \hline
    36 & 15.1GB & 1.9G & 80.5 \\
    49 & 15.8GB & 1.9G & 80.9 \\
    64 & 16.9GB & 2.0G & 80.9 \\
    81 & 18.5GB & 2.1G & 80.4 \\

    \hline
\end{tabular*}
\label{tab:bottleneck token}
\end{table*}
\begin{table*}[ht]
\caption{Ablations on sampling methods.}
\centering
\renewcommand{\arraystretch}{1.5} 
\begin{tabular*}{\textwidth}{@{\extracolsep\fill}lccc}

    \hline
    Sampling methods & Params & FLOPs & Top-1 \%\\
    \hline
    
    \hline
    Convolution & 13.07M & 2.0G & 80.9 \\
    Random sampling & 12.96M & 1.9G & 80.8 \\
    Biased sampling & 12.96M & 1.9G & 79.9 \\
    Average pooling & 12.96M & 1.9G & 80.8 \\

    \hline
\end{tabular*}
\label{tab:samplingmethods}
\end{table*}

\noindent\major{{\bf Symmetric kernels: }
Tsai et al. \citep{tsai2019transformer} conducted a comprehensive analysis of Transformers, revealing minimal performance disparities between asymmetric and symmetric kernels in NLP tasks, such as neural machine translation (NMT) and sequence prediction (SP). 
In contrast, our research, detailed in Table \ref{tab:abl_sym}, extends this investigation to the domain of computer vision, specifically focusing on image classification tasks. 
Our findings indicate that while symmetric kernels do have a marginal detrimental effect in computer vision tasks, this impact is relatively limited. 
Therefore, maintaining a symmetric kernel represents a balanced approach for enabling effective decomposition in our model.}

\noindent{\bf Bottleneck token sequence length: }
\begin{figure*}[htb]
    \centering
    \subfloat{
         \includegraphics[width=0.33\textwidth]{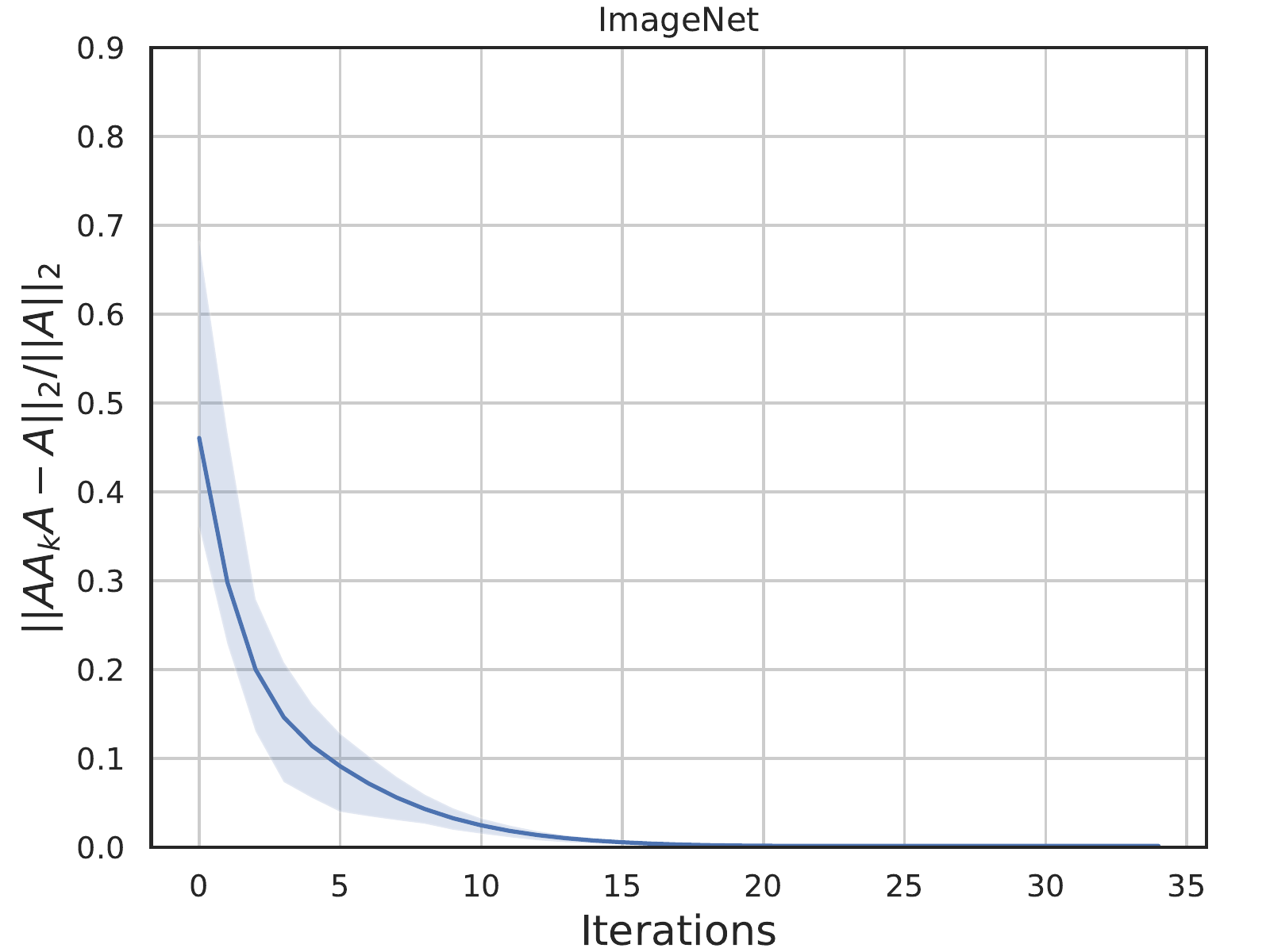}
    }
    \subfloat{
        \includegraphics[width=0.33\textwidth]{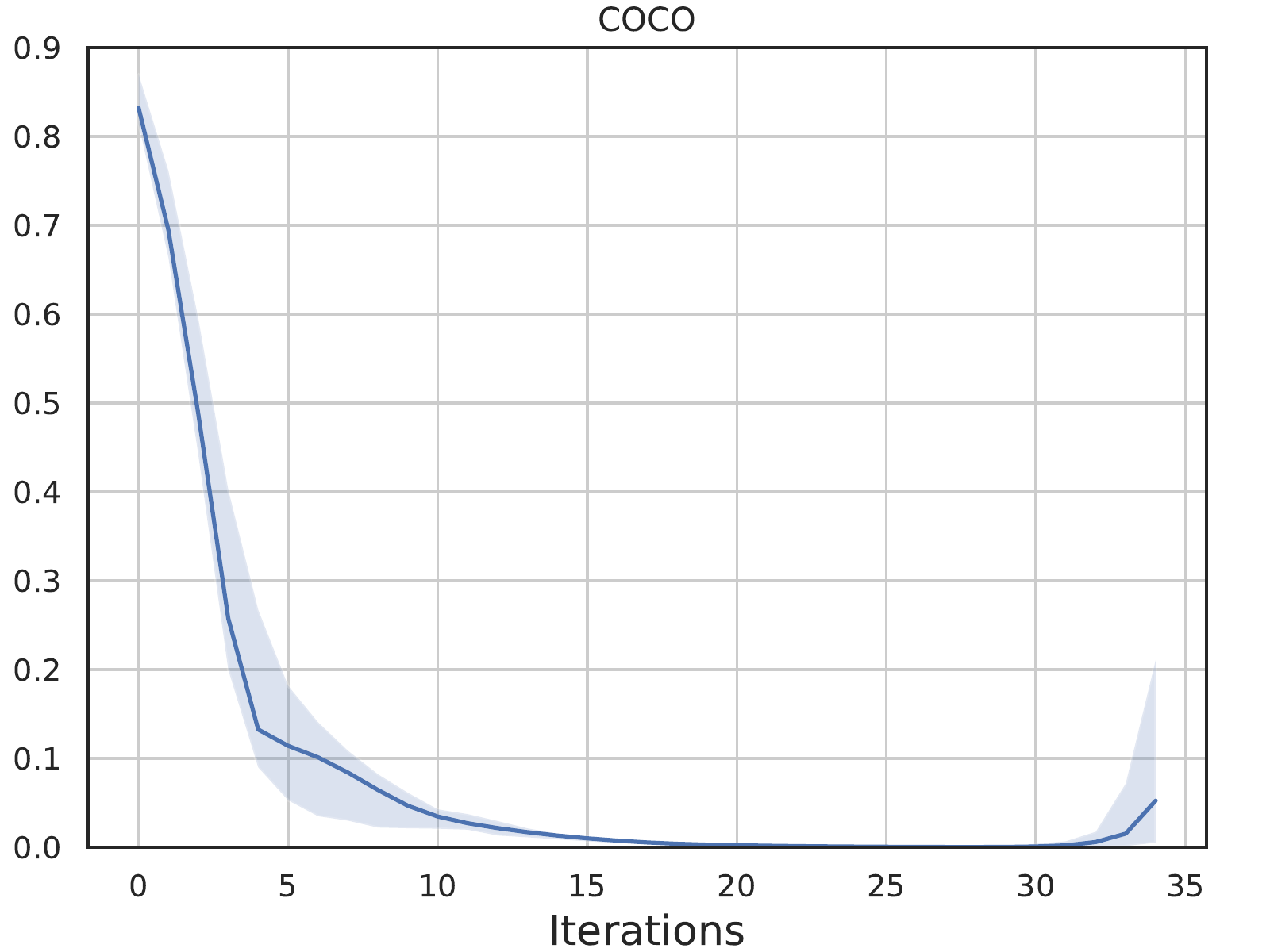}
    }
    \subfloat{
        \includegraphics[width=0.33\textwidth]{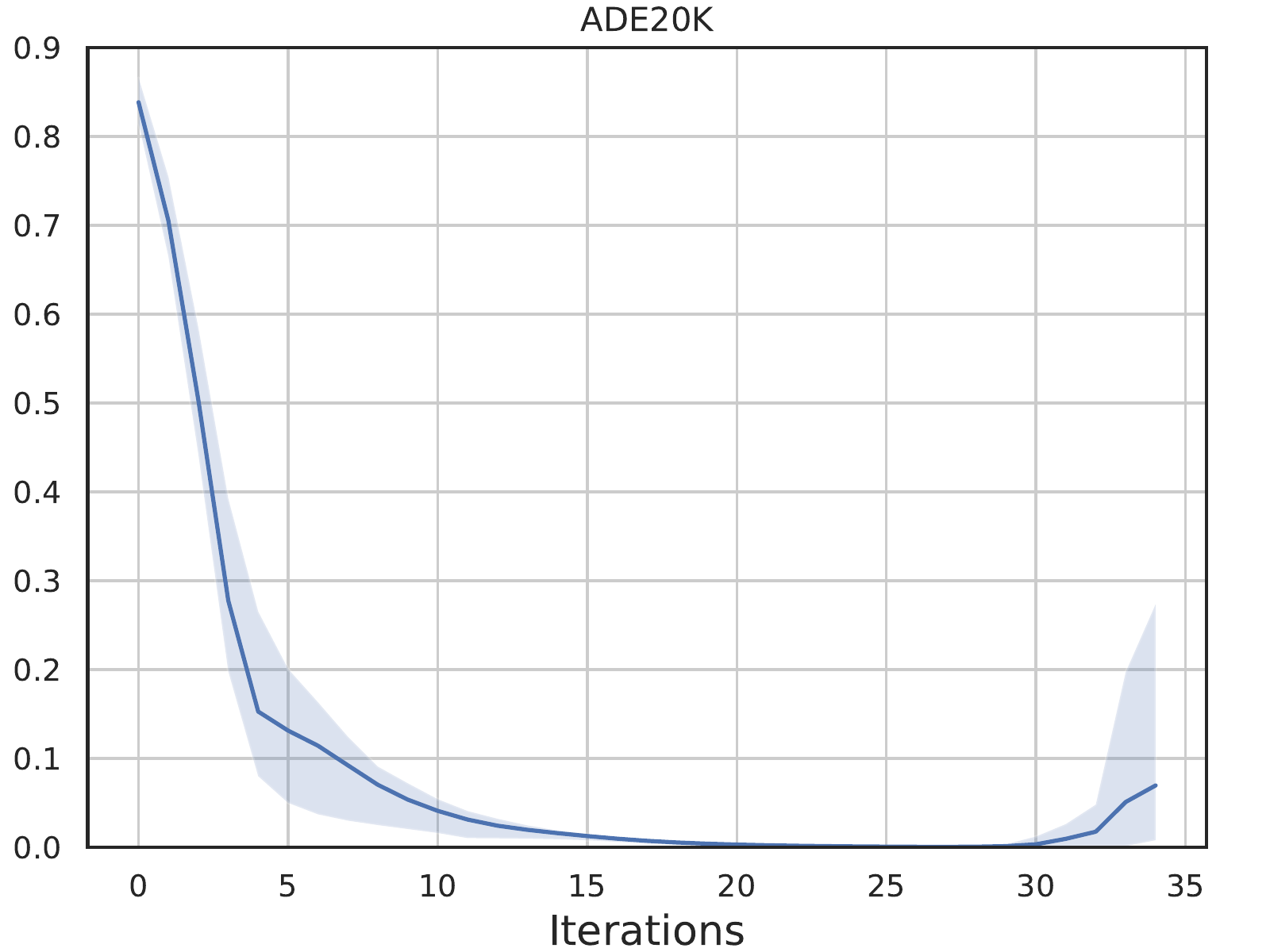}
    }
    \caption{\extension{Convergence analysis for the approximation of Moore-Penrose inverse on ImageNet, COCO and ADE20K separately.
    SOFT-Tiny is used.
    We measure $\|AA_kA-A\|_p/\|A\|_2$ for 100 input images on each dataset.
    The solid line shows the average convergence metric, while the shallow area indicates the upper bound and lower bound.
    }}
    \label{fig:newtoniterationrelerr}
\end{figure*}
In this study, we examine how the bottleneck token sequence length $m$, sampled from $n$ tokens, influences the model's performance. 
We change the bottleneck token sequence length in all stages to $36,49,64,81$.
Table \ref{tab:bottleneck token} shows that longer bottleneck token would increase the memory cost and the computational overhead.
$m=49$ seems to give the best trade-off between the performance and computational overhead. The memory usage is measured with the batch size of 128.

\begin{table*}[!htb]
\centering
\caption{(a) Ablations on pyramidal architecture. 
    (b) Ablations on overlapped convolution.}
\renewcommand{\arraystretch}{1.5} 
        \begin{tabular*}{\textwidth}{@{\extracolsep\fill}lccclcc}
            \cmidrule{1-3} \cmidrule{5-7}
            \cmidrule{1-3} \cmidrule{5-7}
            \cmidrule{1-3} \cmidrule{5-7}
            Methods & \major{Pyramidal?}  & Top-1 \% & & Methods & \major{Overlapped?} & Top-1 \% \\
            \cmidrule{1-3} \cmidrule{5-7}
            DeiT-S~\citep{touvron2021training} & \XSolidBrush  & 79.8 & & PVT~\citep{wang2021pyramid} & \XSolidBrush & 75.1\\
            SOFT++ & \XSolidBrush & 80.1 & & SOFT++ & \XSolidBrush & 78.4\\
            SOFT++ & \checkmark & 82.4 & & SOFT++ & \checkmark & 80.9\\
            \cmidrule{1-3} \cmidrule{5-7}
            \cmidrule{1-3} \cmidrule{5-7}
            \cmidrule{1-3} \cmidrule{5-7}
        \end{tabular*}

    \label{tab:pyramidal-overlapping}
\end{table*}
\noindent{\bf Token sampling: }
The sampling function in SOFT++ can assume different forms.
\textbf{\em Convolution:} The sequence $Q\in \mathbb{R}^{n\times d_e}$ is first reshaped to a feature map $\mathbb{R}^{H\times W\times d_e}$. 
$r\times r$ convolution kernel with stride of $r$ is applied for downsampling, where $r=\sqrt{sp}$. 
The output channel size is also kept and no bias is used.
At last, the feature map is reshaped back to the sequence.
\textbf{\em Average pooling:} 
using a $r\times r$ kernel and $r$ stride, where $r=\sqrt{sp}$.
\textbf{\em Random sampling:} $m$ tokens are randomly picked from $n$ tokens. 
\textbf{\em Biased sampling: }
We pick $m$ tokens
with a biased policy. 
Here, the first $m$ tokens are picked. 
Table \ref{tab:samplingmethods} shows that both convolution yields the best performance.
Biased sampling can miss the most salient samples, and there is no guarantee that random sampling can keep the uniformity of the chosen samples. 
This result thus justifies the choice of using convolution in SOFT++.

\noindent{\bf Overlapped convolution: } We ablate SOFT++ with overlapped convolution (our default choice, same as many recent works) and with non-overlapped convolution in our $\tt Tiny$ configuration.
Table \ref{tab:pyramidal-overlapping}b shows that 
overlapped convolution is a better choice.
Our non-overlapped convolution variant still outperforms  PVT \citep{wang2021pyramid} with the same non-overlapped convolution by a clear margin.

\noindent\textbf{Newton-Raphson's convergence:}
We study how many iterations the Newton-Raphson method needs to converge when computing the Moore-Penrose inverse $A^{\dagger}$.
We use $\|AA_kA-A\|_p/\|A\|_p$ with $p=2$ (see Proposition \ref{pro:AAkA-A}) as the convergence metric to quantify the difference between $A_k$ and $A^\dagger$.
Figure \ref{fig:newtoniterationrelerr} shows that
our approximation converges within
20 iterations across all stages.

\begin{table*}[!htb]
\setlength{\tabcolsep}{7pt} 
\renewcommand{\arraystretch}{1.2} 
\caption{Comparison of different linear/efficient Transformer variants on Long Range Arena~\citep{tay2020long}, based on its official configuration. Our SOFT surpasses previous efficient methods on three tasks.
}
\renewcommand{\arraystretch}{1.5} 
\begin{tabular*}{\textwidth}{@{\extracolsep\fill}lccccc}

\hline
Methods & Listops(2K) & Text(4K) & Retrieval(4K) & Image(1K) & Avg. \%\\
\hline

\hline

Transformer~\citep{vaswani2017attention} & 37.10 & 65.02 & 79.35 & 38.20 & 54.92\\
Reformer~\citep{kitaev2020reformer}  &  19.05 & 64.88 & 78.64 & 43.29 & 51.47\\
Linformer~\citep{wang2020linformer} & 37.25 & 55.91 & 79.37 & 37.84 & 52.59  \\
Performer~\citep{choromanski2020rethinking} & 18.80 & 63.81 & 78.62 & 37.07 & 49.58\\
Nystr{\"o}mformer~\citep{xiong2021nystr} & 37.15 & \textbf{65.52} & 79.56 & 41.58 & 55.95\\
\textbf{SOFT} & \textbf{37.40} & 63.49 & \textbf{81.77} & \textbf{46.91} & \textbf{57.39}\\

\hline
\end{tabular*}
\label{tab:lra}
\end{table*}
\noindent {\bf Effect of attention normalization:}
\rerevision{
Our attention normalization enables the model to perform 
more challenging vision tasks such as object detection and segmentation.
This is because 
the leading eigenvalue of Gaussian kernel self-attention is conditioned on the token sequence length (Proposition \ref{theo:gaussian_eigen}), as shown in Figure \ref{fig:eigen}.
And this effect can be clearly mitigated by our normalization scheme, as indicated in
Figure \ref{fig:eigen_vs_m}.
}
\begin{figure*}[htp]
    \centering
    \hspace{-1em}
    \subfloat[\label{fig:eigen}]{
         \includegraphics[width=0.485\linewidth]{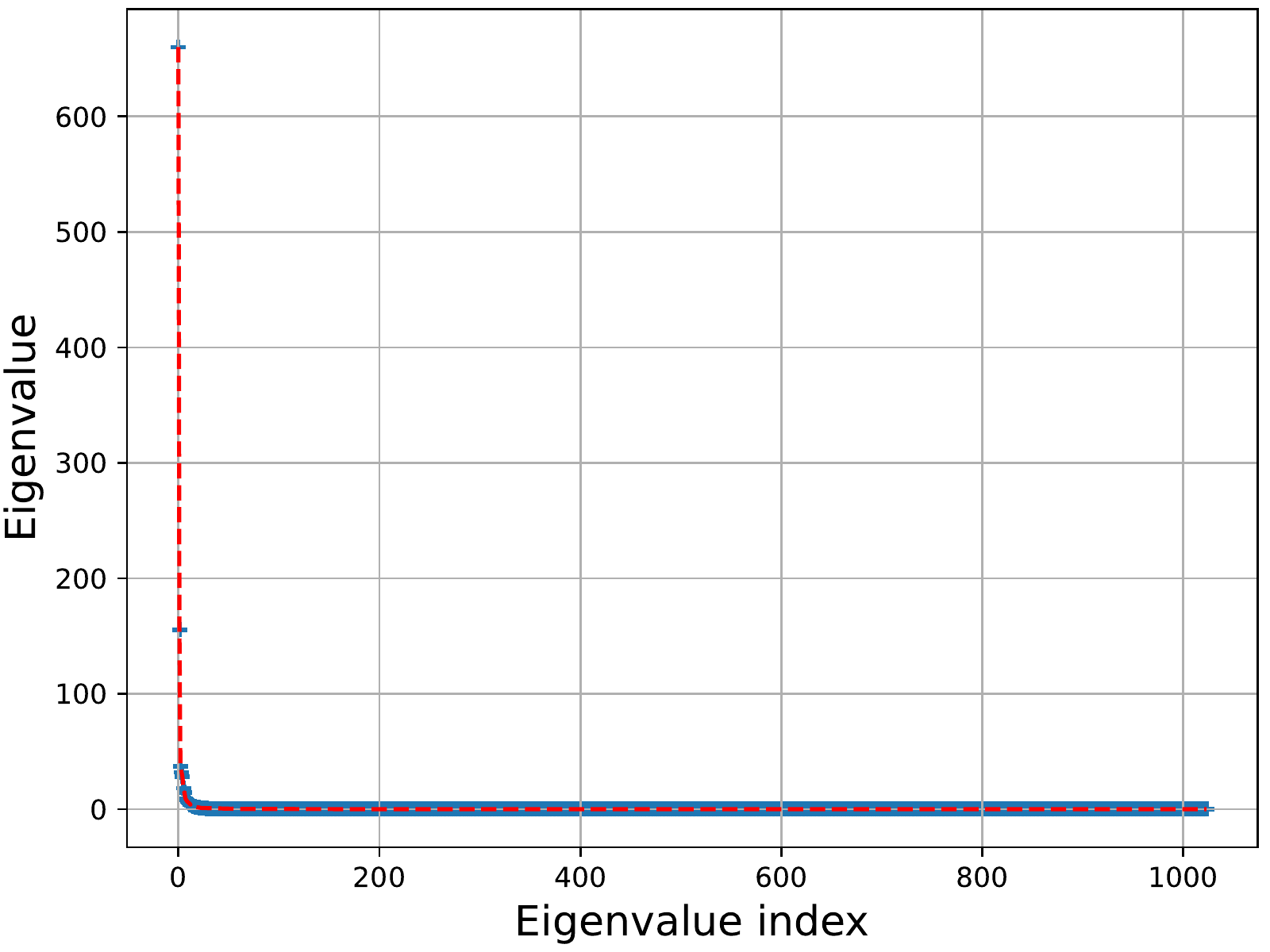}
    }
    \subfloat[\label{fig:eigen_vs_m}]{
        \includegraphics[width=0.5\linewidth]{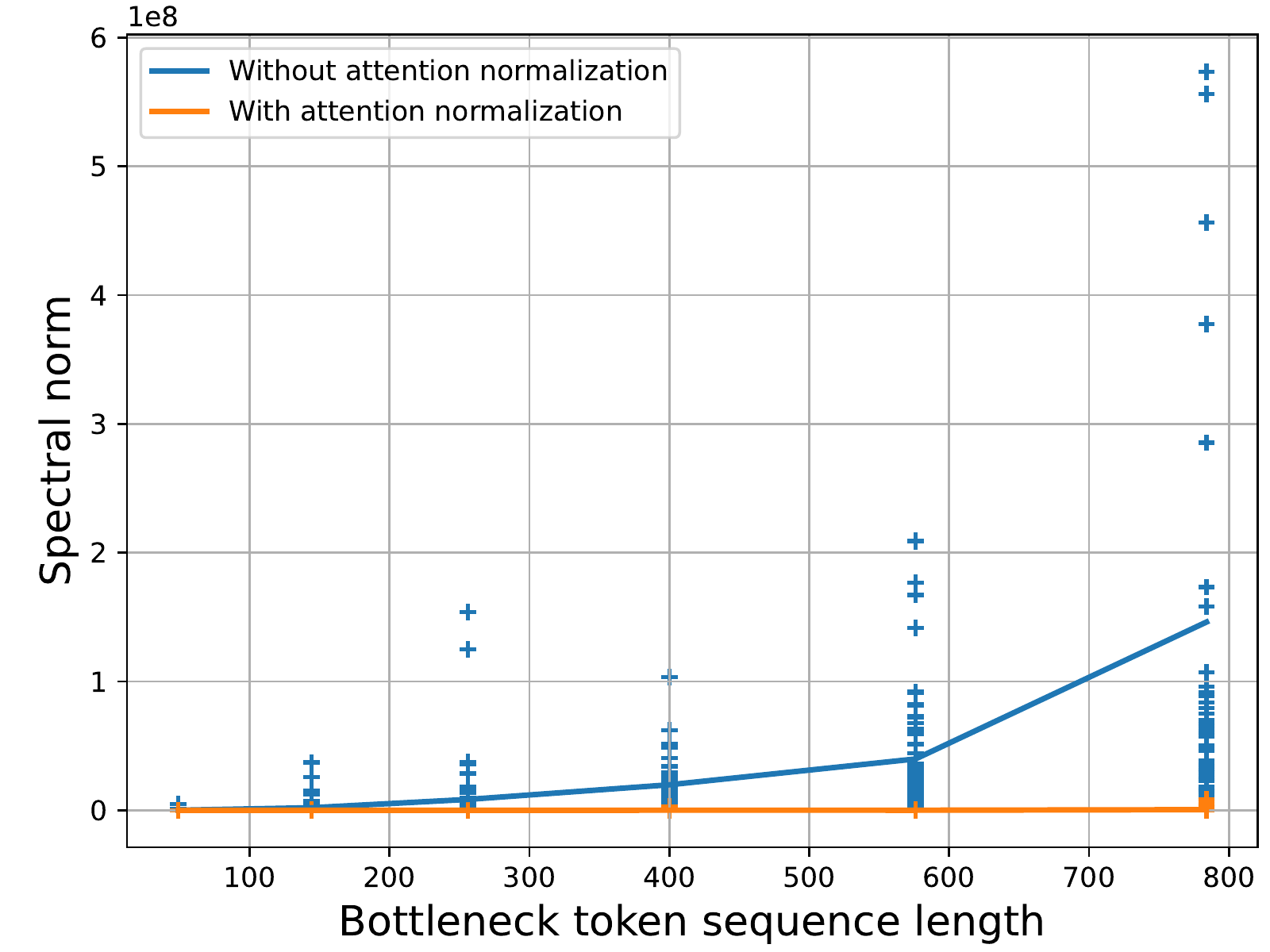}
    }
    \caption{\extension{(a) Eigenvalues of a specific bottleneck matrix $A\in \mathbb{R}^{1024\times 1024}$. 
    The auxiliary (red dash) line connects all the eigenvalues. (b) Comparing the spectral norm of self-attention matrix $A^\dag$ (without normalization, blue) and $D^{-1/2}A^\dag D^{-1/2}$ (with normalization, orange) under a variety of bottleneck token sequence lengths. ``+" represents individual samples; The lines connect the average norm at each token sequence length.}}
\end{figure*}

\subsection{Additional experiments on NLP tasks}
We compare our method with previous linear counterparts on four tasks of the Long Range Arena (LRA) \citep{tay2020long} benchmark, including Listops \citep{nangia2018listops}, byte-level IMDb reviews text classification \citep{maas2011learning}, byte-level document retrieval \citep{radev2013acl}, and image classification on sequences of pixels \citep{krizhevsky2009learning}. 

\noindent\textbf{Implementations.}
We use the Pytorch version of LRA \citep{tay2020long} benchmark, provided by \citep{xiong2021nystr}. 
For the evaluation protocol, we strictly follow \citep{tay2020long,xiong2021nystr}.
We omit the Pathfinder(1K) task as we cannot replicate the result of Nystr{\"o}mformer \citep{xiong2021nystr}.
For our SOFT, we simply use the average pooling with window size 4, stride 4 to sample the bottlenecks.
We follow the configurations of \citep{xiong2021nystr} with 2 layers, 64 and 128 hidden dimension respectively, and 2 attention heads. 
Table \ref{tab:lra} shows that our SOFT outperforms both the standard and alternative efficient Transformers on three out of four tasks, as well as the average performance. 

\section{Conclusions}
\label{conclusions}
In this work, we have introduced a novel 
softmax-free self-attention (SOFT) mechanism
for linearizing Transformer's complexity
in space and time.
Unlike existing linear Transformers
that aim to approximate the conventional softmax based self-attention,
SOFT employs a Gaussian kernel based attention which
eliminates the need for softmax normalization.
This design enables a full self-attention matrix to be approximated via low-rank matrix decomposition.
The robustness of this proposed approximation is achieved by calculating its Moore-Penrose inverse using  a  Newton-Raphson method and \extension{an efficient symmetric attention normalization.} Extensive experiments show that SOFT yields superior
trade-off in accuracy and complexity on a variety of vision and language tasks.
\\
\\
{\small
\noindent\textbf{Acknowledgements}
This work was supported in part by STI2030-Major Projects (Grant No. 2021ZD0200204), National Natural Science Foundation of China (Grant No. 62106050 and 62376060),
Natural Science Foundation of Shanghai (Grant No. 22ZR1407500) and Lingang Laboratory (Grant No. LG-QS-202202-07).
\\
\\
\noindent\textbf{Data availability statement}
The datasets generated during and/or analysed during the current study are available in the Imagenet \citep{deng2009imagenet} (\url{https://www.image-net.org/}), COCO \citep{lin2014microsoft} (\url{https://cocodataset.org}), ADE20K \citep{zhou2019semantic} (\url{https://groups.csail.mit.edu/vision/datasets/ADE20K/}), Cityscapes \citep{cordts2016cityscapes} (\url{https://www.cityscapes-dataset.com}), Long Range Arena \citep{tay2020long} (\url{https://github.com/google-research/long-range-arena}) repositories.
}
\begin{appendix}

\section{Nystr{\"o}m method}
\label{sec:nystrom}
Nystr{\"o}m method~\cite{NIPS2000_nyst} aims to calculate a low-rank approximation for a Gram matrix. For Transformers, the self-attention matrix can be viewed as a Gram matrix $S$ with a Gaussian kernel $k$ applied to the query $Q$, with each element $ S_{ij}$ expressed as:
\begin{equation}
    S_{ij} =k\big(Q_{i,:},Q_{j,:}\big) = \text{exp} (-\frac{\|Q_{i,:}-Q_{j,:}\|_2^2}{2 \sqrt d}),
\end{equation}
$k(x,y)$ means operating Gaussian kernel $k$ to $(x,y)$, which can be written in the feature space as:
\begin{equation}
    k(x,y)=\sum\limits_{i=1}^n \lambda_i \phi_i(x)\phi_i(y),
\end{equation}
$n$ is the dimension of a feature space, $\lambda_i$ denotes the eigenvalue and $\phi_i$ denotes the eigenfunction of kernel $k$. According to the eigenfunction's definition, we can get:
\begin{equation}
    \int k(y,x)\phi_i(x)p(x)dx=\lambda_i \phi_i(y),
\end{equation}
where $p(x)$ is the probability distribution of $x$.
And \{$\phi_i(x)$\} are $p$-orthogonal:
\begin{equation}
    \int \phi_i(x)\phi_j(x)p(x)dx=\delta_{ij}.
\end{equation}
$\delta_{ij}$ is $0$ when $i \neq j$, $1$ when $i = j$. To get an approximation of the eigenfunctions, we sample $\{x_1,x_2,\cdots,x_q\}$ from $p(x)$, then:
\begin{equation}
    \frac{1}{q}\sum\limits_{t=1}^q k(y,x_t)\phi_i(x_t) \approx \lambda_i \phi_i(y),
\end{equation}
\begin{equation}
    \frac{1}{q}\sum\limits_{t=1}^q \phi_i(x_t)\phi_j(x_t) \approx \delta_{ij}.
\end{equation}

This inspires us to approximate the Gram matrix $S$. Let $S^{(m)}$ be a submatrix of $S$, consisting of $m \times m$ elements from $S$. Gram matrix is a symmetric positive semi-definite matrix, so it has a spectral decomposition:
\begin{equation}
    S^{(m)}U^{(m)}=U^{(m)} \Lambda^{(m)},
\end{equation}
where $U^{(m)}$ is column orthogonal and $\Lambda^{(m)}$ is a diagonal matrix with the diagonal elements as the eigenvalues of $S^{(m)}$. Substituting the $y$ to $x_j$ and applying the approximation above to $S$, we can get:
\begin{equation}
    \phi_i(x_j) \approx \sqrt m U_{j,i}^{(m)}, \quad \lambda_i \approx \frac{\lambda_i^{(m)}}{m},
\end{equation}
\begin{equation}
    \phi_i(y) \approx \frac{\sqrt{m}}{\lambda_i^{(m)}}\sum\limits_{t=1}^{m}k(y,x_t)\phi_i(x_t),
\end{equation}
$\lambda_i$ is eigenvalue of $S$ and $\lambda_i^{(m)}$ is the eigenvalue of $S^{(m)}$.
Denote $\Tilde{S}$ as the rank-$m$ approximation of $S$ and $\Tilde{U}, \Tilde{\Lambda}$ as the approximation for spectral decomposition of $S$.  Now we can get an approximation of $S$ with rank $m$:
\begin{equation}
    \Tilde{S}=\Tilde{U}\Tilde{\Lambda}\Tilde{U}^T=\sum\limits_{t=1}^m \Tilde{\lambda_t}^{(n)} \Tilde{u}_t^{(n)}(\Tilde{u}_t^{(n)})^T.
\end{equation}

Similarly, we have:
\begin{equation}
    \phi_i(x_j) \approx \sqrt{n} U_{j,i}(n),\quad \lambda_i \approx \frac{\Tilde{\lambda_i}^{(n)}}{n}.
\end{equation}

Thus
\begin{equation}
    \Tilde{\lambda_i}^{(n)} \approx \frac{n \lambda_i^{(m)}}{m},
\end{equation}
\begin{equation}
    \Tilde{u}_t^{(n)} \approx \sqrt{\frac{m}{n}}\frac{1}{\lambda_t^{(m)}}S_{n,m} u_t^{(m)}.
\end{equation}

Then we get an approximation of $S$: $\Tilde{S} \approx S_{n,m} S_{m,m}^{\dagger} S_{m,n}$. 
$S$ has a block representation below:
\begin{equation}
    S=
    \begin{bmatrix}
    S_{m,m} & S_{m,n-m} \\
    S_{n-m,m} & S_{n-m,n-m}
    \end{bmatrix}.
\end{equation}

\section{Newton method}
\label{suppsec:newton}
\noindent {\bf Proof of Proposition~\ref{theo}} 
When $\alpha$ is sufficiently small, $A_{k+1}=2A_k-A_k A A_k$,  $A_k $ converges to $A^{\dagger}$.
\begin{proof}
$A$ is a symmetric positive semi-definite matrix and $A_{ij}\leq 1$, $\forall 1\leq i,j \leq n$, $A_{ii}=1,\quad 1\leq i\leq n$ in our case. $A_0$ is chosen to be $\alpha A$, so the $A_k$ can be written as $A_k=C_k A =AD_k$ for some matrix $C_k,D_k$, leading to the fact that 
\begin{equation}
     A^{\dagger}AA_k=A_k,\quad A_k A A^{\dagger}=A_k.
\end{equation}
This is because $A_{k+1}=A_k(2I_n - AA_k)=(2I_n - A_kA)A_k$ and $A_0=\alpha A$.
We make a difference between $A^{\dagger}$ and $A_{k+1}$:     
\begin{align}
     A^{\dagger}-A_{k+1} &= A^{\dagger} - 2A_k + A_k A A_k   \notag \\
    &= A^{\dagger} -A_k A A^{\dagger} -A^{\dagger} A A_k +A_k A A_k   \notag \\
    &= (A^{\dagger}-A_k) A (A^{\dagger}-A_k).
    \label{eq: iter}
\end{align}
We norm both sides of the equation above: 
\begin{align}
    \|  A^{\dagger}-A_{k+1} \|  &= \| ( A^{\dagger}-A_{k})A( A^{\dagger}-A_{k})\|  \notag \\
    &\leq \|  A^{\dagger}-A_{k} \|\|A( A^{\dagger}-A_{k})\|.
    \label{converge}
\end{align}
And we left multiply $A$ on the both sides of (Eq~\ref{eq: iter}), then norm the equation:
\begin{align}
    \|A A^{\dagger}-A A_{k+1}\| &=\|A (A^{\dagger}-A_k) A (A^{\dagger}-A_k)\| \notag \\
    & \leq \|A A^{\dagger} -A A_{k}\|^2 .
\end{align}
We choose $\alpha$ sufficiently small so that the initial value satisfy $\|AA^{\dagger}-A A_0\| < 1$. We set $\alpha =\frac{2}{\|A\|_1^2}$ to ensure it is small enough~\cite{ben1966iterative}. Then the $\|A A^\dagger - A A_{k}\|\rightarrow 0$, when $k \rightarrow \infty$. The inequality (\ref{converge}) implies that $A_k \rightarrow A^{\dagger},\ k \rightarrow \infty$.
\end{proof}

\noindent {\bf Proof of Equation~\ref{equ:inverse_back}} 
\begin{equation*}
    \nabla_x \mathcal{L} = -Y^\top (\nabla_Y \mathcal{L}) Y^\top, 
\end{equation*}
\begin{proof}
Since $XY=I$, we take derivatives on the both sides
\begin{align}
    \notag YdX + XdY &= 0\\
    dY &= -YdXY
\end{align}
For the loss function $\mathcal{L}$, we have
\begin{equation*}
    d\mathcal{L} = \langle \nabla_X \mathcal{L}, dX \rangle = \langle \nabla_Y \mathcal{L}, dY \rangle
\end{equation*}
Recall that the inner product $\langle A,B\rangle = \text{Tr}(A^\top B)$ and $\text{Tr}(ABC)=\text{Tr}(CAB)=\text{Tr}(BCA)$, we have
\begin{align*}
    \langle \nabla_X \mathcal{L}, dX \rangle &= \langle \nabla_Y \mathcal{L}, dY \rangle \\
    &= \langle \nabla_Y \mathcal{L}, -YdXY \rangle\\
    &= -\text{Tr}(\nabla_Y \mathcal{L}^\top YdXY )\\
    &= \langle -Y^\top \nabla_Y \mathcal{L} Y^\top, dX \rangle
\end{align*}
Therefore, Eq~\eqref{equ:inverse_back} is proved.
\end{proof}

\section{Attention normalization}
\label{suppsec:attn_norm}
\noindent{\bf Proof of Porpostion \ref{theo:inverse_eigen}}
Assume the bottleneck matrix of softmax-free attention , $A\in\mathbb{R}^{m\times m}$ is $k$-connected. If $\lambda_1\geq \lambda_2\geq \cdots \lambda_m \geq 0$ are eigenvalues of $A^\dag$, then $\lambda_1 = \mathcal{O}(m^2)$ and $\|A^\dag\|_2 = \mathcal{O}(m^2)$.

To prove Proposition~\ref{theo:inverse_eigen}, we first introduce the following lemmas.
\begin{lemma}
If $F$ is a non-singular and symmetric matrix, then $\|F\|_2\leq \|F\|_1$.
\end{lemma}
\begin{proof}
According to the definition of 2-Norm, we have $F^TFz=\mu^2z$ with $\mu = \|F\|_2$. Therefore, $\mu^2\|z\|_1 = \|F^TFz\|_2\leq$\\
$\|F^T\|_1\|F\|_1\|z\|_1 = \|F\|_{\infty}\|\|F_1\|z\|_1$.
Since $A$ is non-singular and symmetric, 
\begin{equation}
    \|F\|_2 \leq \sqrt{\|F\|_{\infty}\|F\|_1} = \|F\|_1
\end{equation}
\end{proof}
\begin{lemma}
\label{lemma:inv_pert}
If $F$ is a real matrix and $\|F\|_p < 1$, then
\begin{equation}
\label{equ:inv_pert}
    (I-F)^{-1} = \sum_{k=0}^\infty F^k
\end{equation}
with
\begin{equation}
    \|(I-F)^{-1}\|_p \leq \frac{1}{1-\|F\|_p}
\end{equation}
\end{lemma}
\begin{proof}
\begin{equation*}
    \left( \sum_{k=0}^N F^k \right)(I-F) = I- F^{N+1}
\end{equation*}
Since $\|F\|_p<1$, $\lim_{k\rightarrow \infty} F^k=0$. Thus,
\begin{equation*}
\left( \lim_{N\rightarrow \infty} \sum_{k=0}^N F^k \right)(I-F) = I
\end{equation*}
Multiply the equation by $(I-F)^{-1}$ we obtain Eq~\eqref{equ:inv_pert}. From that, we can easily get
\begin{equation*}
    \|(I-F)^{-1}\|_p \leq \sum_{k=0}^\infty \|F\|_p^k = \frac{1}{1-\|F\|_p}
\end{equation*}
\end{proof}
Now we begin to prove Proposition~\ref{theo:inverse_eigen}.
\begin{proof}
Since the bottleneck matrix softmax-free attention $\|A\|_1$ is not less than 1, we normalize the matrix as $A_n = D^{-1}A$, where $D = \text{diag}(A\mathbbm{1}_m)$.
From Lemma~\ref{lemma:inv_pert}, it follows that
\begin{equation*}
    \|A_n^{-1}\|_1 \leq \frac{1}{1-\|1-A_n\|_1}
\end{equation*}
Since we assume the bottleneck matrix is k-connected, and $k<<m$, 
\begin{equation*}
    \|I-A_n\|_1 \leq \frac{(m-k)}{m}
\end{equation*}
Then,
\begin{equation*}
    \|A_n^{-1}\|_1 \leq \frac{1}{1 - \frac{(m-k)}{m}} = \frac{m}{k}= \mathcal{O}(m)
\end{equation*}
Note that we $D=\mathcal{O}(m)$ by assumption that the bottleneck matrix is k-connected, it follows that
\begin{equation*}
    \|A^{-1}\|_1 = \|A_n^{-1}D\|_1 \leq \|A_n^{-1}\|_1\|D\|_1 = \mathcal{O}(m^2)
\end{equation*}
\end{proof}

\section{Non-linearized Gaussian kernel attention}
Instead of directly calculating the Gaussian kernel weights, in our formulation they are approximated. More specifically, the relation between any two tokens is reconstructed via sampled bottleneck tokens. As the number $m$ (\eg, 49) of bottleneck tokens is much smaller than the entire token sequence length, our attention matrix is of low-rank. This brings about two favorable consequences: \textbf{(I) }The model now focuses the attentive learning on latent salient information captured by the $m$ bottleneck tokens. \textbf{ (II) }The model becomes more robust against the underlying token noise due to the auto-encoder style reconstruction~\cite{ham}.
\extension{
This explains why a model with an approximated gram matrix performs better than one with a directly computed matrix. Further, we find that exact Gaussian kernel attention leads to training difficulties. 
As Proposition~4 reveals, this is due to lacking normalization that leads to explosion of the spectral norm
especially in long token sequence cases.
Big spectral norm could jeopardize the training and
tend to collapse the model.
}

\section{Attention visualization}
\label{sec:attn_vis}
Figure~\ref{fig:supattention} shows more visualization of the attention masks by various Transformers~\cite{vaswani2017attention,choromanski2020rethinking,xiong2021nystr} and our SOFT.
For each model, we show the output from the first two attention heads (up and down row).
It is noteworthy that SOFT exhibits better semantic diversity of the multi-head mechanism than other methods. 
Moreover, when we sample the patch at the boundary of multiple objects, SOFT is able to more precisely 
capture all these objects.
\begin{figure*}[!htb]\centering
\includegraphics[width=\linewidth]{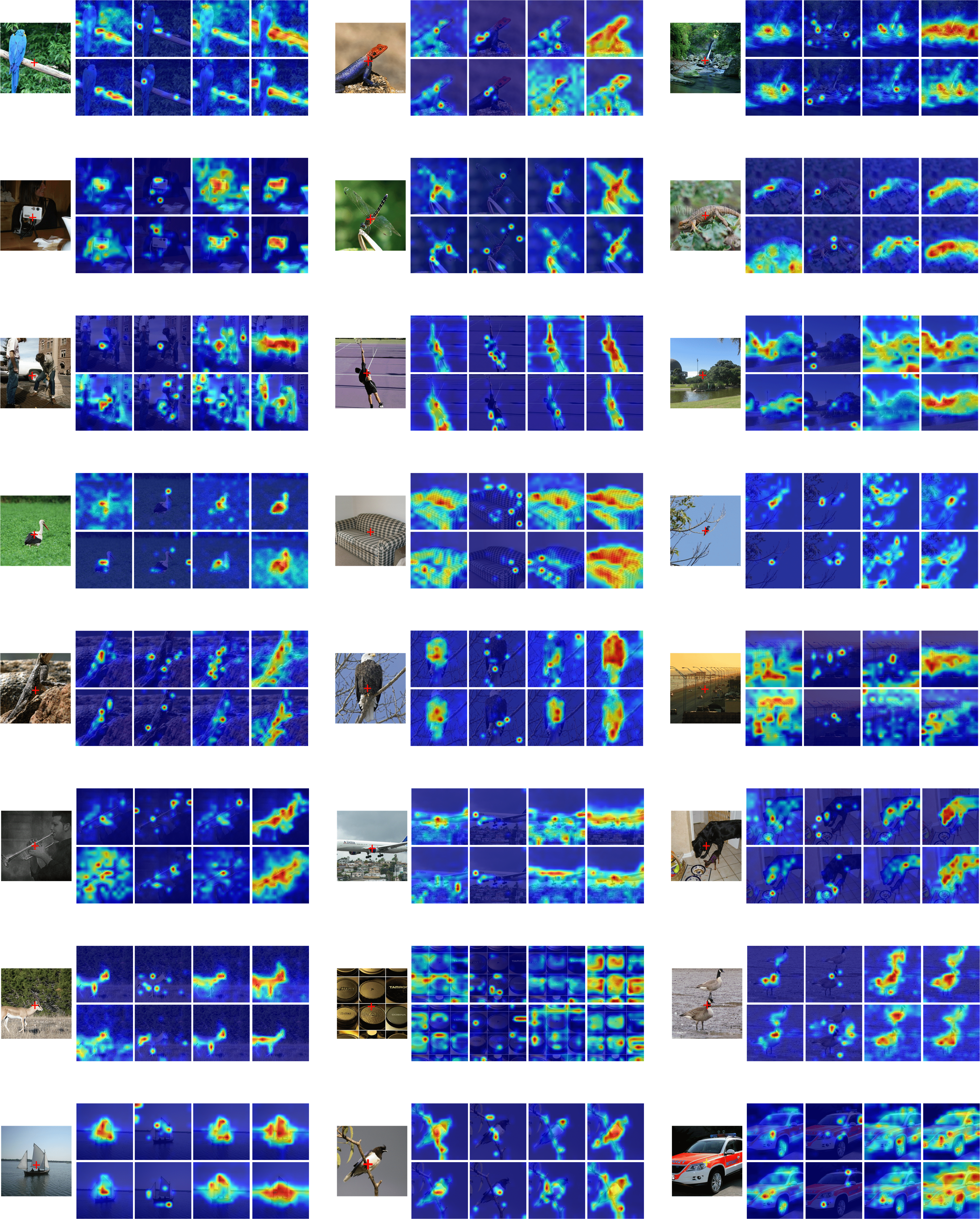}
\caption{
\major{Comparison of attention heatmaps for a selected query patch (indicated by a cross "+") against all patches in an image. 
Heatmaps are derived from the first head's corresponding row in the attention maps, as calculated by Equation~\ref{eq:reg_norm_attn}. These heatmaps are normalized to a 0-1 scale, with warmer colors indicating higher relevance. The model variants compared are: \textbf{(a)} Transformer~\citep{vaswani2017attention}, \textbf{(b)} Performer~\citep{choromanski2020rethinking}, \textbf{(c)} Nystromformer~\citep{xiong2021nystr}, and \textbf{(d)} Our SOFT approach.}}
\label{fig:supattention}
\end{figure*}
\end{appendix}


\clearpage
\bibliographystyle{spbasic}      
\bibliography{main}   


\end{document}